\newcommand{\mb}{\mathbb}
\newcommand{\mc}{\mathcal}
\newcommand{\rar}{\rightarrow}
\newcommand{\D}{D}
\newcommand{\bmat}[1]{\begin{bmatrix}#1\end{bmatrix}}
\newtheorem{assumption}{Assumption}
\newcommand{\spec}{\mathrm{spec}}
\newcommand{\f}{f}
\begin{document}

\title{On Finding Local Nash Equilibria (and Only Local Nash Equilibria) in Zero-Sum Continuous Games}
\author{\name Eric Mazumdar \email emazumdar@eecs.berkeley.edu \\ \addr Department of Electrical Engineering and Computer Science\\
       University of California, Berkeley, CA 94720-1776, USA
       \AND
       \name Michael I.\ Jordan \email jordan@cs.berkeley.edu \\ \addr Division of Computer Science and Department of Statistics\\
       University of California, Berkeley, CA 94720-1776, USA
       \AND
       \name S. Shankar Sastry \email sastry@coe.berkeley.edu \\ \addr Department of Electrical Engineering and Computer Science\\
       University of California, Berkeley, CA 94720-1776, USA}

\editor{}

\maketitle

\begin{abstract}%

We propose local symplectic surgery, a two-timescale procedure for finding local Nash equilibria in two-player zero-sum games. We first show that previous gradient-based algorithms cannot guarantee convergence to local Nash equilibria due to the existence of non-Nash stationary points. By taking advantage of the differential structure of the game, we construct an algorithm for which the local Nash equilibria are the only attracting fixed points. We also show that the algorithm exhibits no oscillatory behaviors in neighborhoods of equilibria and show that it has the same per-iteration complexity as other recently proposed algorithms. We conclude by validating the algorithm on two numerical examples: a toy example with multiple Nash equilibria and a non-Nash equilibrium, and the training of a small generative adversarial network (GAN).

\end{abstract}

\section{Introduction}

The classical problem of finding Nash equilibria in multi-player games has been
a focus of intense research in computer science, control theory, economics
and mathematics~\citep{BasarOlsder,NisanEtAl,NashPPAD}. Some connections have been made between this
extensive literature and machine learning~\citep[see, e.g.,][]{Cesa-BianchiLugosi,
banerjee:2003aa,LOLA}, but these connections have focused principally on
decision-making by single agents and multiple agents, and not on the burgeoning
pattern-recognition side of machine learning, with its focus on large data sets and
simple gradient-based algorithms for prediction and inference.  This gap has begun to
close in recent years, due to new formulations of learning problems as involving competition
between subsystems that are construed as adversaries~\citep{goodfellow:2014ab},
the need to robustify learning systems with regard to against actual
adversaries~\citep{Robustness} and with regard to mismatch between assumptions and data-generating mechanisms~\citep{Drift, GiordanoEtAl}, and an increasing awareness that real-world machine-learning systems are often embedded in larger economic systems or
networks~\citep{Jordan-AI}.

These emerging connections bring significant algorithmic and conceptual challenges
to the fore.  Indeed, while gradient-based learning has been a major success
in machine learning, both in theory and in practice, work on gradient-based
algorithms in game theory has often highlighted their limitations.  For example,
gradient-based approaches are known to be difficult to tune and
train~\citep{daskalakis:2017aa,paper:CO,hommes:2012aa, paper:ACO}, and recent work has shown that gradient-based learning will almost surely avoid a
subset of the local Nash equilibria in general-sum games~\citep{paper:old}.
Moreover, there is no shortage of work showing that gradient-based algorithms
can converge to limit cycles or even diverge in game-theoretic settings~\citep{benaim:1999ac,hommes:2012aa,
daskalakis:2017aa, mertikopoulos:2018aa}.

These drawbacks have led to a renewed interest in approaches to finding the
Nash equilibria of zero-sum games, or equivalently, to solving saddle point
problems.  Recent work has attempted to use second-order information to reduce
oscillations around equilibria and speed up convergence to fixed points of
the gradient dynamics \citep{paper:CO,paper:ACO}.  Other recent approaches have
attempted to tackle the problem from the variational inequality perspective
but also with an eye on reducing oscillatory behaviors \citep{VIP,VIP2}.

None of these approaches, however, address a fundamental issue that arises in
zero-sum games. As we will discuss, the set of attracting fixed points for the
gradient dynamics in zero-sum games can include critical points that are not
Nash equilibria.  In fact, any saddle point of the underlying function that does
not satisfy a particular alignment condition of a Nash equilibrium is a candidate
attracting equilibrium for the gradient dynamics. Further, as we show, these
points are attracting for a variety of recently proposed adjustments to
gradient-based algorithms, including consensus optimization \citep{paper:CO},
the symplectic gradient adjustment \citep{paper:ACO}, and a two-timescale
version of simultaneous gradient descent \citep{paper:wrongTTS}.  Moreover,
we show by counterexample that these algorithms can all converge to non-Nash
stationary points.

We present a new gradient-based algorithm for finding the local Nash equilibria of
two-player zero-sum games and prove that the only stationary points to which the
algorithm can converge are local Nash equilibria.  Our algorithm makes essential use
of the underlying structure of zero-sum games.  To obtain our theoretical results we
work in continuous time---via an ordinary differential equation (ODE)---and our
algorithm is obtained via a discretization of the ODE.  While a naive discretization
would require a matrix inversion and would be computationally burdensome, our
discretization is a two-timescale discretization that avoids matrix inversion
entirely and is of a similar computational complexity as that of other gradient-based
algorithms.

The paper is organized as follows.  In Section~\ref{sec:prelims} we define our
notation and the problem we address.  In Section~\ref{sec:alg} we define the
limiting ODE that we would like our algorithm to follow and show that it has
the desirable property that its only limit points are local Nash equilibria of
the game. In Section~\ref{sec:thms} we introduce local symplectic surgery, a two-timescale procedure that
asymptotically tracks the limiting ODE and show that it can be implemented efficiently.
Finally, in Section~\ref{sec:num} we present two numerical examples to validate the
algorithm. The first is a toy example with three local Nash equilibria, and one
non-Nash fixed point. We show that simultaneous gradient descent and other recently
proposed algorithms for zero-sum games can converge to any of the four points
while the proposed algorithm only converges to the local Nash equilibria.
The second example is a small generative adversarial network (GAN), where
we show that the proposed algorithm converges to a suitable solution within
a similar number of steps as simultaneous gradient descent.

\section{Preliminaries}
\label{sec:prelims}

We consider a two-player game, in which one player tries to minimize a function, $f:\mathbb{R}^d\rar\mathbb{R}$, with respect to their decision variable $x \in \mathbb{R}^{d_x}$, and the other player aims to maximize $f$ with respect to their decision variable $y \in \mathbb{R}^{d_y}$, where $d=d_y+d_x$. We write such a game as $\mc{G}=\{(f,-f),\mathbb{R}^d\}$, since the second player can be seen as minimizing $-f$. We assume that neither player knows anything about the critical points of $f$, but that both players follow the rules of the game. Such a situation arises naturally when training machine learning algorithms (e.g., training generative adversarial networks or in multi-agent reinforcement learning). Without restricting $f$, and assuming both players are non-cooperative, the best they can hope to achieve is a \emph{local Nash equilibrium}; i.e., a point $(x^*,y^*)$ that satisfies
\[f(x^*,y)\le f(x^*,y^*) \le f(x,y^*),\]
for all $x$ and $y$ in neighborhoods of $x^*$ and $y^*$ respectively. Such equilibria are locally optimal for both players with respect to their own decision variable, meaning that neither player has an incentive to unilaterally deviate from such a point. As was shown in \citet{ratliff:2013aa}, generically, local Nash equilibria will satisfy slightly stronger conditions, namely they will be differential Nash equilibria (DNE):

\begin{definition}
  \label{def:DNE}
  A strategy $(x^*,y^*) \in \mathbb{R}^n$ is a \emph{differential Nash equilibrium} if:
  \begin{itemize} 
    \item $D_xf(x^*,y^*)=0$ and $D_yf(x^*,y^*)=0$.
    \item $\D^2_{xx}\f(x^*,y^*)\succ 0$, and $\D^2_{yy}\f(x^*,y^*)\prec0$.
  \end{itemize} 
\end{definition}
Here $D_xf$ and $D_yf$ denote the partial derivatives of $f$ with respect to $x$ and $y$ respectively, and $D^2_{xx}f$ and $D^2_{yy}f$ denote the matrices of second derivatives of $f$ with respect to $x$ and $y$. Both differential and local Nash equilibria in two-player zero-sum games are, by definition, special saddle points of the function $f$ that satisfy a particular \emph{alignment condition} with respect to the player's decision variables. Indeed, the definition of differential Nash equilibria, which holds for almost all local Nash equilibria in a formal mathematical sense, makes this condition clear: the directions of positive and negative curvature of the function $f$ at a local Nash equilibria must be \emph{aligned} with the minimizing and maximizing player's decision variables respectively.

We note that the key difference between local and differential Nash equilibria is that $\D^2_{xx}\f(x,y)$, and $\D^2_{yy}\f(x,y)$ are required to be definite instead of semidefinite. This distinction simplifies our analysis while still allowing our results to hold for almost all continuous games.

\subsection{Issues with gradient-based algorithms in zero-sum games}
Having introduced local Nash equilibria as the solution concept of interest, we now consider how to find such solutions, and in particular we highlight some issues with gradient-based algorithms in zero-sum continuous games. The most common method of finding local Nash equilibria in such games is to have both players randomly initialize their variables $(x_0,y_0)$ and then follow their respective gradients. That is, at each step $n=1,2,...$, each agent updates their variable as follows:
\begin{align*}
x_{n+1}&=x_n-\gamma_nD_xf(x_n,y_n)\\
y_{n+1}&=y_n+\gamma_nD_yf(x_n,y_n),
\end{align*}
where $\{\gamma_n\}_{n=0}^\infty$ is a sequence of step sizes. The minimizing player performs gradient descent on their cost while the maximizing player ascends their gradient. We refer to this algorithm as simultaneous gradient descent (simGD). To simplify the notation, we let $z=(x,y)$, and define the vector-valued function $\omega: \mathbb{R}^d \rar \mathbb{R}^d$ as:
\[ \omega(z)=\begin{bmatrix} D_x f(x,y) \\ -D_y f(x,y) \end{bmatrix}. \]
In this notation, the simGD update is given by:
\begin{align}
    z_{n+1}=z_n-\gamma_n\omega(z_n).
    \label{eq:dtomega}
\end{align}
Since \eqref{eq:dtomega} is in the form of a discrete-time dynamical system, it is natural to examine its limiting behavior through the lens of dynamical systems theory. Intuitively, given a properly chosen sequence of step sizes, ~\eqref{eq:dtomega} should have the same limiting behavior as the continuous-time flow:
\begin{align}
    \dot z=-\omega(z).
    \label{eq:ctomega}
\end{align}
We can analyze this flow in neighborhoods of equilibria by studying the Jacobian matrix of $\omega$, denoted  $J: \mathbb{R}^d \rar \mathbb{R}^{d \times d}$:
\begin{align}
J(z)=\begin{bmatrix} \ \ \ D^2_{xx} f(x,y) & \ \ \ D^2_{yx} f(x,y) \\ -D^2_{xy} f(x,y) &-D^2_{yy} f(x,y) \end{bmatrix}.
\label{eq:jac}
\end{align}
We remark that the diagonal blocks of $J(z)$ are always symmetric and $D^2_{xy}f=(D^2_{yx}f)^T$. Thus $J(z)$ can be written as the sum of a block symmetric matrix $S(z)$ and a block anti-symmetric matrix $A(z)$, where:
\begin{align*}
S(z)=\begin{bmatrix} \ \ \ D^2_{xx} f(z) & \ \ \ 0 \\ 0 &-D^2_{yy} f(z) \end{bmatrix} \ \ ; \ \ A(z)=\begin{bmatrix} \ \ \ 0 & \ \ \ D^2_{yx} f(z) \\ -D^2_{xy} f(z) &0 \end{bmatrix}.
\end{align*}

Given the structure of the Jacobian, we can now draw links between differential Nash equilibria and equilibrium concepts in dynamical systems theory. We focus on hyperbolic critical points of $\omega$.

\begin{definition}
A strategy $z\in \mb{R}^d$ is a critical point of $\omega$ if $\omega(z)=0$. It is a \emph{hyperbolic critical point} if $Re(\lambda)\ne 0$ for all $\lambda\in\spec(J(z^*))$, where $Re(\lambda)$, denotes the real part of the eigenvalue $\lambda$ of $J(z^*)$.
\end{definition}
It is well known that hyperbolic critical points are generic among critical points of smooth dynamical systems (see e.g. \citep{sastry:1999aa}), meaning that our focus on hyperbolic critical points is not very restrictive. Of particular interest are locally asymptotically stable equilibria of the dynamics.

\begin{definition}
A strategy $z^*\in \mb{R}^d$ is a \emph{locally asymptotically stable equilibrium} (LASE) of the continuous-time dynamics $\dot z=-\omega(z)$ if $\omega(z^*)=0$ and $Re(\lambda)>0$ for all $\lambda\in\spec(J(z^*))$.
\end{definition}
LASE have the desirable property that they are locally exponentially attracting under the flow of $-\omega$. This implies that a properly discretized version of $\dot z=-\omega(z)$ will also converge exponentially fast in a neighborhood of such points. LASE are the only attracting hyperbolic equilibria. Thus, making statements about all the LASE of a certain continuous-time dynamical system allows us to characterize all attracting hyperbolic equilibria.

As shown in \citet{ratliff:2013aa} and \citet{paper:Kolter}, the fact that all differential Nash equilibria are critical points of $\omega$ coupled with the structure of $J$ in zero-sum games guarantees that all differential Nash equilibria of the game are LASE of the gradient dynamics. However the converse is not true. The structure present in zero-sum games is not enough to ensure that the differential Nash equilibria are the only LASE of the gradient dynamics. When either $D^2_{xx}f$ or $D^2_{yy}f$ is indefinite at a critical point of $\omega(z)$, the Jacobian can still have eigenvalues with strictly positive real parts. 

\begin{example}
    Consider a matrix $M \in \mb{R}^{2\times 2}$ having the form:
    \begin{align*}
        M=\bmat{a & c \\ -c & -b },
    \end{align*}
    where $a,b \in \mb{R}$ and $a,b>0$. These conditions imply that $M$ cannot be the Jacobian of $\omega$ at an local Nash equilibria. However, if $b<a$ and $c^2>ab$, both of the eigenvalues of $M$ will have strictly positive real parts, and such a point could still be a LASE of the gradient dynamics.
\end{example}

Such points, which we refer to as non-Nash LASE of ~\eqref{eq:ctomega}, are what makes having guarantees on the convergence of algorithms in zero-sum games particularly difficult. Non-Nash LASE are not locally optimal for both players, and may not even be optimal for one of the players. By definition, at least one of the two players has a direction in which they would move to unilaterally decrease their cost. Such points arise solely due to the gradient dynamics, and persist even in other gradient-based dynamics suggested in the literature. In Appendix~\ref{app:counter}, we show that three recent algorithms for finding local Nash equilibria in zero-sum continuous games---consensus optimization, symplectic gradient adjustment, and a two-time scale version of simGD---are susceptible to converge to such points and therefore have no guarantees of convergence to local Nash equilibria. We note that such points can be very common since every saddle point of $f$ that is not a local Nash equilibrium is a candidate non-Nash LASE of the gradient dynamics. Further, local minima or maxima of $f$ could also be non-Nash LASE of the gradient dynamics. 

To understand how non-Nash equilibria can be attracting under the flow of $-\omega$, we again analyze the Jacobian of $\omega$. At such points, the symmetric matrix $S(z)$ must have both positive and negative eigenvalues. The sum of $S$ with $A$, however, has eigenvalues with strictly positive real part. Thus, the anti-symmetric matrix $A(z)$ can be seen as stabilizing such points. 

Previous gradient-based algorithms for zero-sum games have also pinpointed the matrix $A$ as the source of problems in zero-sum games, however they focus on a different issue. Consensus optimization \citep{paper:CO} and the symplectic gradient adjustment \citep{paper:ACO} both seek to adjust the gradient dynamics to reduce oscillatory behaviors in neighborhoods of stable equilibria. Since the matrix $A(z)$ is anti-symmetric, it has only imaginary eigenvalues. If it dominates $S$, then the eigenvalues of $J$ can have a large imaginary component. This leads to oscillations around equilibria that have been shown empirically to slow down convergence \citep{paper:CO}. Both of these adjustments rely on tunable hyper-parameters to achieve their goals. Their effectiveness is therefore highly reliant on the choice of parameter. Further, as shown in Appendix~\ref{app:counter} neither of the adjustments are able to rule out convergence to non-Nash equilibria. 

A second promising line of research into theoretically sound methods of finding the Nash equilibria of zero-sum games has approached the issue from the perspective of variational inequalities \citep{VIP,VIP2}. In \cite{VIP} extragradient methods were used to solve coherent saddle point problems and reduce oscillations when converging to saddle points. In such problems, however, all saddle points of the function $f$ are assumed to be local Nash equilibria, and thus the issue of converging to non-Nash equilibria is assumed away. Similarly, by assuming that $\omega$ is monotone, as in the theoretical treatment of the averaging scheme proposed in \cite{VIP2}, the cost function is implicitly assumed to be convex-concave. This in turn implies that the Jacobian satisfies the conditions for a Nash equilibrium everywhere. The behavior of their approaches in more general zero-sum games with less structure (like the training of GANs) is therefore not well known. Moreover, since their approach relies on averaging the gradients, they do not fundamentally change the nature of the critical points of simGD.

In the following sections we propose an algorithm for which the only LASE are the differential Nash equilibria of the game. We also show that, regardless of the choice of hyper-parameter, the Jacobian of the new dynamics at LASE has real eigenvalues, which means that the dynamics cannot exhibit oscillatory behaviors around differential Nash equilibria.

\section{Constructing the limiting differential equation}

In this section we define the continuous-time flow that our discrete-time algorithm should ideally follow.

\begin{assumption}[Lipschitz assumptions on $f$ and $J$]
Assume that $f \in \mc{C}^3(\mathbb{R}^d,\mathbb{R})$ and $f$ and $\omega$ are $L_f$-Lipschitz and $L_\omega$-Lipschitz respectively. Finally assume that all critical points of $\omega$ are hyperbolic.
\label{ass:fJ1}
\end{assumption}

\noindent We do not require $J(z)$ to be invertible everywhere, but only at the critical points of $f$.  

Now, consider the continuous-time flow:
\begin{align}
\dot z=-h(z)=-\frac{1}{2}\left(\omega(z)+J^T(z)\left(J^T(z)J(z)+\lambda(z)I\right)^{-1}J^T(z)\omega(z)\right),
\label{eq:ctalg}
\end{align}
where $\lambda \in \mc{C}^2(\mb{R}^d,\mb{R})$ is such that $0\le\lambda(z)\le \xi$ for all $z \in \mb{R}^d$ and $\xi>0$ and $\lambda(z)=0 \iff \omega(z)=0$. 

The function $\lambda$ ensures that, even when $J$ is not invertible everywhere, the inverse matrix in \eqref{eq:ctalg} exists. The vanishing condition $\lambda(z)=0 \iff \omega(z)=0$ ensures us that the Jacobian of the adjustment term is exactly $J^T(z)$ at differential Nash equilibria. 

The dynamics introduced in \eqref{eq:ctalg} can be seen as an adjusted version of the gradient dynamics where the adjustment term only allows trajectories to approach critical points of $\omega$ along the players' axes. If a critical point is not locally optimal for one of the players (i.e., it is a non-Nash critical point) then that player can push the dynamics out of a neighborhood of that point. The mechanism is easier to see if we assume $J$ is invertible and set  $\lambda(z)\equiv0$. This results in the following dynamics:
\begin{align}
\dot z=-\frac{1}{2}\left(\omega(z)+J^T(z)J^{-1}(z)\omega(z)\right).
\label{eq:ct_goal}
\end{align}
In this simplified form we can see that the Jacobian of the adjustment is approximately $J^T$ when $||\omega(z)||_2$ is small. This approximation is exact at critical points of $\omega$. Adding this adjustment term to $\omega$ exactly cancels out the rotational part of the vector field contributed by the antisymmetric matrix $A(z)$ in a neighborhood of critical points. Since we identified $A(z)$ as the source of oscillatory behaviors and non-Nash equilibria in Section~\ref{sec:prelims}, this adjustment addresses both of these issues. The following theorem establishes this formally.

\begin{theorem}
\label{thm:algprops}
Under Assumption~\ref{ass:fJ1} and if $J^T(z)\left(J^T(z)J(z)+\lambda(z)\right)^{-1}J^T(z)\omega(z)\ne -\omega(z) \ \  \forall \ \ z\in\mb{R}^d$, the continuous-time dynamical system $\dot z=-h(z)$ satisfies:
\begin{itemize}
\item $z$ is a LASE of $\dot z=-h(z)$  $\iff$ $z$ is a differential Nash equilibrium of the game $\{ (f,-f),\mathbb{R}^d\}$.
\item If $z$  is a critical point of $\dot z=-h(z)$, then the Jacobian of $h$ at $z$ has real eigenvalues. 
\end{itemize}
\end{theorem}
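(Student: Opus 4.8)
The plan is to reduce both bullets to a single computation: the Jacobian $Dh$ evaluated at any critical point equals the block-symmetric matrix $S(z)$. Once this identity is in hand, both conclusions follow from elementary linear algebra. Before that I would pin down the critical set of the new dynamics. Since $\lambda(z)=0\iff\omega(z)=0$ and $J$ is invertible at critical points of $\omega$, any $z^*$ with $\omega(z^*)=0$ gives $\lambda(z^*)=0$ and hence $h(z^*)=\tfrac12(\omega(z^*)+J^T(J^TJ)^{-1}J^T\omega(z^*))=0$; conversely, the standing hypothesis $J^T(J^TJ+\lambda I)^{-1}J^T\omega\neq-\omega$ rules out any spurious zeros of $h$ where $\omega\neq0$. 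Thus the critical points of $\dot z=-h(z)$ are exactly the critical points of $\omega$, at each of which $\lambda$ vanishes.

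The heart of the argument is the Jacobian computation. Writing $h=\tfrac12(\omega+M\omega)$ with $M(z)=J^T(z)\left(J^T(z)J(z)+\lambda(z)I\right)^{-1}J^T(z)$, I first note that $M$ and hence $h$ are $C^1$ near $z^*$: $f\in\mc{C}^3$ makes $J=D\omega$ a $C^1$ map, $\lambda$ is $C^2$, and $J^TJ+\lambda I$ is invertible in a neighborhood of $z^*$ since it reduces to the invertible matrix $J^TJ$ there. Differentiating the product $M\omega$ and evaluating at $z^*$, every term in which the derivative falls on $M$ carries a factor $\omega(z^*)=0$ and drops out, leaving $Dh(z^*)=\tfrac12\left(J+M(z^*)J\right)$. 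Using $\lambda(z^*)=0$ and invertibility of $J$, the factor simplifies as $M(z^*)=J^T(J^TJ)^{-1}J^T=J^TJ^{-1}$, so that $M(z^*)J=J^T$. Hence $Dh(z^*)=\tfrac12(J+J^T)$, and because $J=S+A$ with $S$ symmetric and $A$ anti-symmetric, this is precisely $S(z^*)$.

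With $Dh(z^*)=S(z^*)$ established, the two conclusions are immediate. For the second bullet, $S(z^*)$ is symmetric, so all its eigenvalues—and therefore all eigenvalues of $Dh$ at any critical point—are real. For the first bullet, $z^*$ is a LASE of $\dot z=-h(z)$ exactly when every eigenvalue of $Dh(z^*)=S(z^*)$ has strictly positive real part; since $S(z^*)$ is symmetric with real spectrum, this is equivalent to $S(z^*)\succ0$, i.e. $\D^2_{xx}\f(z^*)\succ0$ and $\D^2_{yy}\f(z^*)\prec0$. Together with the first-order condition $\omega(z^*)=0$, these are exactly the defining conditions of a differential Nash equilibrium, yielding the claimed equivalence.

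I expect the main obstacle to be the rigorous matrix calculus in the Jacobian step, specifically justifying that the derivative-of-$M$ terms vanish (which hinges entirely on $\omega(z^*)=0$), confirming differentiability of $M$ at $z^*$ via local invertibility of $J^TJ+\lambda I$, and carrying out the cancellation $M(z^*)J=J^T$. The role of $\lambda$ warrants particular care: although $D\lambda$ could in principle enter through $DM$, it appears only multiplied by $\omega(z^*)=0$, so the behavior of $\lambda$ away from critical points is irrelevant to the local analysis, and only the properties $\lambda(z^*)=0$ and $\lambda\in\mc{C}^2$ are used.
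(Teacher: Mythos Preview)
Your proposal is correct and mirrors the paper's own proof: both first identify the critical set of $h$ with that of $\omega$ via the standing hypothesis, then compute $Dh(z^*)=\tfrac12(J+J^T)=S(z^*)$ by using $\omega(z^*)=0$, $\lambda(z^*)=0$, and invertibility of $J$ at hyperbolic critical points, and finally read off both bullets from the symmetry and block structure of $S$. Your treatment is in fact slightly more careful than the paper's in spelling out why $M$ is $C^1$ near $z^*$ and why the $DM$-terms vanish, but the strategy is identical.
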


\begin{proof} 
We first show that:
\[ h(z)=0 \iff \omega(z)=0.\] 
Clearly, $\omega(z)=0 \implies h(z)=0$. To show the converse, we assume that $h(z)=0$ but $\omega(z)\ne0$. This implies that:
\[J^T(z)\left(J^T(z)J(z)+\lambda(z)I\right)^{-1}J^T(z)\omega(z)=-\omega(z).\]
Since we assumed that this cannot be true, we must have that $h(z)=0 \implies \omega(z)=0$.

Having shown that under our assumptions, the critical points of $h$ are the same as those of $\omega$, we now note that the Jacobian of $h(z)$ at a critical point must have the form:
\[ J_h(z)=\frac{1}{2}\left(J(z)+J^T(z)(J^T(z)J(z))^{-1}J^T(z)J(z)\right)=\frac{1}{2}\left(J(z)+J^T(z)\right)=S(z).\]
By assumption, at critical points, $J(z)$ is invertible and $\lambda(z)=0$. Given that $\omega(z)=0$, terms that include $\omega(z)$ disappear, and the adjustment term contributes only a factor of $J^T(z)$ to the Jacobian of $h$ at a critical point. This exactly cancels out the antisymmetric part of the Jacobian of $\omega$. The Jacobian of $h$ is therefore symmetric at critical points of $\omega$ and has positive eigenvalues only when $D_{xx}^2f(z)\succ0$ and $D_{yy}^2f(z)\prec 0$. 

Since these are also the conditions for differential Nash equilibria, all differential Nash equilibria of $\mc{G}$ must be LASE of $\dot z=-h(z)$. Further, non-Nash LASE of $\dot z =-\omega(z)$ cannot be LASE of $\dot z=-h(z)$, since by definition either $D_{xx}^2f(z)$ or $D_{yy}^2f(z)$ is indefinite at such points. 
To show the second part of the theorem, we simply note that $J_h$ must be symmetric at all critical points which in turn implies that it has only real eigenvalues.
\end{proof}

The  continuous-time dynamical system therefore solves both of the problems we highlighted in Section~\ref{sec:prelims}, for any choice of the function $\lambda$ that satisfies our assumptions. The assumption that $\omega(z)$ is never an eigenvector of $J^T(z)\left(J^T(z)J(z)+\lambda(z)I\right)^{-1}J^T(z)$ with an eigenvalue of $-1$ ensures that the adjustment does not create new critical points. In high dimensions the assumption is mild since the scenario is extremely specific, but it is also possible to show that this assumption can be removed entirely by adding a time-varying term to $\eqref{eq:ctalg}$ while still retaining the theoretical guarantees.  We show this in Appendix~\ref{app:tv}.

Theorem~\ref{thm:algprops} shows that the only attracting hyperbolic equilibria of the limiting ordinary differential equation (ODE) are the differential Nash equilibria of the game. Also, since $J_h(z)$ is symmetric at critical points of $\omega$, if either $D_{xx}^2f(z)$ or $-D_{yy}^2f(z)$ has at least one negative eigenvalue then such a point would be a linearly unstable equilibrium of $\dot z=-h(z)$. Such points are linearly unstable and are therefore almost surely avoided when the algorithm is randomly initialized \citep{benaim:1995aa,sastry:1999aa}. 

Theorem~\ref{thm:algprops} also guarantees that the continuous-time dynamics do not oscillate near critical points. Oscillatory behaviors, as outlined in \cite{paper:CO}, are known to slow down convergence of the discretized version of the process. Reducing oscillations near critical points is the main goal of consensus optimization \citep{paper:CO} and the symplectic gradient adjustment \citep{paper:ACO}. However, for both algorithms, the extent to which they are able to reduce the oscillations depends on the choice of hyperparameter. The proposed dynamics achieves this for any $\lambda(z)$ that satisfies our assumptions.

We close this section by noting that one can pre-multiply the adjustment term by some function $g(z)$ such that $\omega(z)=0\implies g(z)=1$ while still retaining the theoretical properties described in Theorem~\ref{thm:algprops}. Such a function can be used to ensure that the dynamics closely track a trajectory of simGD except in neighborhoods of critical points. For example, if the matrix $J$ is ill-conditioned, such a term could be used to ensure that the adjustment does not dominate the underlying gradient dynamics. In Section~\ref{sec:num} we give an example of such a damping function.

\label{sec:alg}

\section{Two-timescale approximation}
\label{sec:thms}

Given the limiting ODE, we could perform a straightforward Euler discretization to obtain a discrete-time update having the form:
\begin{align*}
    z_{n+1}=z_n-\gamma h(z_n).
\end{align*}
However, due to the matrix inversion, such a discrete-time update would be prohibitively expensive to implement in high-dimensional parameter spaces like those encountered when training GANs. To solve this problem, we now introduce a two-timescale approximation to the continuous-time dynamics that has the same limiting behavior, but is much faster to compute at each iteration, than the simple discretization. Since this procedure serves to exactly remove the symplectic part, $A(z)$ of Jacobian in neighborhoods of hyperbolic critical points, we refer to this two-timescale procedure as local symplectic surgery (LSS). In Appendix~\ref{app:tv} we derive the two-timescale update rule for the time-varying version of the limiting ODE and show that it also has the same properties. 

The two-timescale approximation to \eqref{eq:ctalg} is given by:
\begin{align}
\begin{split}
	\label{eq:tts}
    z_{n+1}&=z_n-{a_n}h_1(z_n,v_n)\\
    v_{n+1}&=v_n-b_nh_2(z_n,v_n),
\end{split}
\end{align}
where $h_1$ and $h_2$ are defined as:
\begin{align*}
    h_1(z,v)&=\frac{1}{2}\left(\omega(z)+ J^T(z)v\right)\\
    h_2(z,v)&=J^T(z)J(z)v-J^T(z)\omega(z)+\lambda(z)v,
\end{align*}
and the sequences of step sizes $\{a_n\}_{n=0}^\infty$,$\{b_n\}_{n=0}^\infty$ satisfy the following assumptions:

\begin{assumption}[Assumptions on the step sizes] The sequences $\{a_n\}_{n=0}^\infty$ and $\{b_n\}_{n=0}^\infty$ satisfy:
\begin{itemize}
	\item $\sum_{i=1}^\infty a_i=\infty$, and  $\sum_{i=1}^\infty b_i=\infty$;
	\item $\sum_{i=1}^\infty a^2_i<\infty$, and  $\sum_{i=1}^\infty b^2_i<\infty$;
	\item $\lim_{n \rar \infty} \frac{a_n}{b_n}=0$.
\end{itemize}
\label{ass:ss1}
\end{assumption}

We note that $h_2$ is Lipschitz continuous in $v$ uniformly in $z$ under Assumption~\ref{ass:fJ1}.  


\begin{algorithm}[tb]
   \caption{Local Symplectic Surgery}
   \label{alg:alg1}
\begin{algorithmic}
   \STATE {\bfseries Input} Functions $f$, $\omega$, $J$, $
   \lambda$; Step sizes $a_n$, $b_n$; Initial values $(x_0,y_0,v_0)$
   \STATE \textbf{Initialize} $(x,y,v,n) \leftarrow (x_0, y_0, v_0,0)$
   \WHILE{not converged}
   		  \STATE $g_x\leftarrow D_x \left[f(x,y)+\omega^T(x,y)v\right]$ 
        \STATE $g_y\leftarrow D_y \left[ -f(x,y)+\omega^T(x,y)v \right]$ 
        \STATE $g_v \leftarrow D_v \left[ ||J(x,y)v-\omega(x,y)||_2^2+\lambda(x,y)||v||_2^2 \right]$
        \STATE $x \leftarrow x-a_n g_x$
        \STATE $y \leftarrow y-a_n g_y$
        \STATE $v \leftarrow v-b_n g_v$
        \STATE $n \leftarrow n+1$
   \ENDWHILE
   \STATE \textbf{Output} $(x^*,y^*) \leftarrow (x,y)$
\end{algorithmic}
\end{algorithm}

The $v$ process performs gradient descent on a regularized version of least squares, where the regularization is governed by $\lambda(z)$. If the $v_n$ process is on a faster time scale, the intuition is that it will first converge to $(J^T(z_n)J(z_n)+\lambda(_n)I)^{-1}\omega(z_n)$, and then $z_n$ will track the limiting ODE in \eqref{eq:ctalg}. In the next section we show that this behavior holds even in the presence of noise. 

The key benefit to the two-timescale process is that $z_{n+1}$ and $v_{n+1}$ can be computed efficiently since neither require a matrix inversion. In fact, as we show in Appendix~\ref{app:jac_vec_prods}, the computation can be done with Jacobian-vector products with the same order of complexity as that of simGD, consensus optimization, and the symplectic gradient adjustment. This insight gives rise to the procedure outlined in Algorithm~\ref{alg:alg1}. 

\subsection{Long-term behavior of the two-timescale approximation}

We now show that LSS asymptotically tracks the limiting ODE even in the presence of noise. This implies that the algorithm has the same limiting behavior as \eqref{eq:ctalg}.
In particular, our setup allows us to treat the case where one only has access to unbiased estimates of $h_1$ and $h_2$ at each iteration. This is the setting most likely to be encountered in practice, for example in the case of training GANs in a mini-batch setting. 

We assume that we have access to estimators $\hat h_1$ and $\hat h_2$ such that:
\begin{align*}
	\mb{E}\left[\hat h_1(z,v)\right]&=\omega(z)+J^T(z)v\\
	\mb{E}\left[\hat h_2(z,v)\right]&=J^T(z)J(z)v+J^T(z)\omega(z).
\end{align*}
To place this in the form of classical two-timescale stochastic approximation processes, we write each estimator $\hat h_1$ and $\hat h_2$ as the sum of its mean and zero-mean noise processes $M^z$ and $M^v$ respectively. This results in the following two timescale process:
\begin{align}
\label{eq:TTSsys}
\begin{split}
z_{n+1}&=z_n-a_n[\omega(z_n)+J^T(z_n)v_n +M^z_{n+1}]\\
v_{n+1}&=v_n-b_n[J^T(z_n)J(z_n)v_n+J^T(z_n)\omega(z_n) +\lambda(z_n)v_n+M^v_{n+1}].
\end{split}
\end{align}
We assume that the noise processes satisfy the following standard conditions \citep{benaim:1999ab,borkar:book}:
\begin{assumption}{Assumptions on the noise:}
Define the filtration $\mc{F}_n$: 
\[ \mc{F}_n=\sigma(z_0,v_0,M^v_1,M^z_1,...,M^z_n,M^v_n),\]
for $n\ge 0$. Given $\mc{F}_n$, we assume that:
\begin{itemize}
	\item $M_{n+1}^v$ and $M_{n+1}^z$ are conditionally independent given $\mc{F}_{n}$ for $n\ge0$.
	\item $\mb{E}[M_{n+1}^v|\mc{F}_n]=0$ and $\mb{E}[M_{n+1}^z|\mc{F}_n]=0$ for $n\ge0$.
	\item $\mb{E}[||M_{n+1}^z|||\mc{F}_n]\le c_z(1+||z_n||)$ and $\mb{E}[||M_{n+1}^v|||\mc{F}_n]\le c_v(1+||z_n||)$ almost surely for some positive constants $c_z$ and $c_v$.
\end{itemize}
\label{ass:noise}
\end{assumption}
Given our assumptions on the estimator, cost function, and step sizes we now show that \eqref{eq:TTSsys} asymptotically tracks a trajectory of the continuous-time dynamics almost surely. Since $h_1$, $h_2$, and $v^*(z)=(J^T(z)J(z)+\lambda(z)I)^{-1}\omega(z)$ are not uniformly Lipschitz continuous in both $z$ and $v$, we cannot directly invoke results from the literature. Instead, we adapt the proof of Theorem 2 in Chapter 6 of \cite{borkar:book} to show that $v_n \rar v^*(z_n)$ almost surely. We then invoke Proposition 4.1 from \cite{benaim:1999ab} to show that $z_n$ asymptotically tracks $\dot z=-h(z)$. We note that this approach only holds on the event $\{\sup_n ||z_n||+||v_n|| <\infty\}$. Thus, if the stochastic approximation process remains bounded, then under our assumptions we are sure to track a trajectory of the limiting ODE.

\begin{lemma}
\label{lemma:vconv}
Under Assumptions \ref{ass:fJ1}-\ref{ass:noise}, and on the event $\{\sup_n ||z_n||_2+||v_n||_2 <\infty\}$: 

\[ (z_n,v_n)\rar \{(z,v^*(z)): z \in \mathbb{R}^d\},\] 
\noindent almost surely.
\end{lemma}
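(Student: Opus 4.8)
The plan is to establish this as a two-timescale stochastic approximation result, following the structure of Theorem 2 in Chapter 6 of \cite{borkar:book}, but adapting it to handle the lack of uniform Lipschitz continuity. The key idea is that because $a_n/b_n \to 0$, the $v$-process evolves on a faster timescale, so from its point of view the slow variable $z_n$ is quasi-static. I would therefore aim to show that $v_n$ tracks the unique zero of the fast-timescale vector field with $z$ held fixed, namely $v^*(z)=(J^T(z)J(z)+\lambda(z)I)^{-1}\omega(z)$.

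First I would set up the fast-timescale ODE for fixed $z$, which is the linear system $\dot v = -(J^T(z)J(z)+\lambda(z)I)v + J^T(z)\omega(z)$. Since $\lambda(z)\ge 0$ and $J^T(z)J(z)\succeq 0$, the matrix $J^T(z)J(z)+\lambda(z)I$ is positive definite wherever the inverse is needed (it is positive definite whenever $\lambda(z)>0$, i.e. away from critical points, and $J(z)$ is assumed invertible at critical points), so this ODE is globally exponentially stable with unique equilibrium $v^*(z)$. I would construct a Lyapunov function for this fast system, the natural choice being $W(z,v)=\tfrac{1}{2}\|v-v^*(z)\|_2^2$ or a quadratic form adapted to the matrix, and verify strict decrease along the fast flow. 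Next I would define the rescaled timescale based on $\{b_n\}$ and argue, using the step-size conditions in Assumption~\ref{ass:ss1} together with the noise conditions in Assumption~\ref{ass:noise} (martingale property, conditional independence, and the linear-growth bound $\mb{E}[\|M^v_{n+1}\|\mid\mc{F}_n]\le c_v(1+\|z_n\|)$), that the martingale noise contribution vanishes asymptotically. The linear-growth bound combined with the boundedness assumption $\sup_n\|z_n\|_2+\|v_n\|_2<\infty$ is exactly what lets me control the noise even without global Lipschitz constants.

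The heart of the argument, and where the adaptation from \cite{borkar:book} is needed, is handling that $h_2$ and $v^*$ are \emph{not} uniformly Lipschitz in $z$. The standard theorem assumes global Lipschitz continuity, which fails here because $J^T(z)J(z)$ is only locally Lipschitz. My plan is to exploit the conditioning on the event $\{\sup_n\|z_n\|_2+\|v_n\|_2<\infty\}$: on this event the iterates $(z_n,v_n)$ remain in a (random) compact set $K$, and on any compact set $f\in\mc{C}^3$ guarantees that $\omega$, $J$, $J^TJ$, $\lambda$, and hence $v^*$ are all Lipschitz. I would thus run the Borkar-style argument with the Lipschitz constants taken locally over $K$, establishing that the discrepancy $v_n-v^*(z_n)$ obeys an asymptotically contracting recursion whose drift is controlled by the exponential stability of the fast ODE and whose perturbations (from the slow drift of $z_n$ at rate $a_n$, and from the noise) are summable-in-effect due to $a_n/b_n\to 0$ and $\sum b_n^2<\infty$.

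The main obstacle I expect is making the local-Lipschitz restriction rigorous: one cannot simply fix the compact set in advance because it is determined by the (random) sample path, so the argument has to be run pathwise on the boundedness event, and one must ensure that all the asymptotic estimates (Gronwall-type bounds, the vanishing of the discretization and noise errors) depend only on the compact set and not on a priori global constants. Once the fast variable is shown to converge, $v_n\to v^*(z_n)$, the conclusion $(z_n,v_n)\to\{(z,v^*(z)):z\in\mb{R}^d\}$ follows. I would remark that the second stage --- showing $z_n$ then tracks $\dot z=-h(z)$ via Proposition~4.1 of \cite{benaim:1999ab} --- is deferred, since this lemma only asserts convergence of $v_n$ to the graph of $v^*$.
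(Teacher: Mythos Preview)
Your proposal is correct and follows the same two-timescale philosophy as the paper, but the tactical execution differs. The paper's proof is much shorter: it rewrites \emph{both} updates on the fast step-size $b_n$, so that the $z$-update becomes $z_{n+1}=z_n-b_n\bigl[\tfrac{a_n}{b_n}h_1(z_n,v_n)+\tfrac{a_n}{b_n}M^z_{n+1}\bigr]$ with vanishing drift (since $a_n/b_n\to 0$ and $h_1$ is bounded on the event), and then directly invokes Lemma~1 of Chapter~6 in \cite{borkar:book} to conclude that $(z_n,v_n)$ converges to the internally chain-transitive invariant sets of the coupled ODE $\dot z=0$, $\dot v=-h_2(z,v)$; since for each fixed $z$ the fast ODE has $v^*(z)$ as its unique globally asymptotically stable equilibrium, the conclusion follows. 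Your plan instead unrolls what is essentially the content of that lemma by hand---constructing the Lyapunov function $W(z,v)=\tfrac12\|v-v^*(z)\|^2$, controlling the discrepancy $v_n-v^*(z_n)$ via Gronwall-type estimates, and treating the slow drift of $z_n$ and the noise as perturbations. That works, and your handling of the local-Lipschitz issue (restricting to the random compact set on the boundedness event and running all estimates pathwise) is exactly right; it is the same device the paper relies on when it notes $h_1$ is bounded on that event. The paper's route buys brevity; yours is more self-contained and makes the stability mechanism explicit.
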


\begin{proof}
We first rewrite \eqref{eq:TTSsys} as:
\begin{align*}
z_{n+1}&=z_n-b_n\left[\frac{a_n}{b_n}h_1(z_n,v_n)+\bar M^z_{n+1}\right]\\
v_{n+1}&=v_n-b_n[h_2(z_n,v_n)+M^v_{n+1}],
\end{align*}
where $\bar M^z_{n+1}=\frac{a_n}{b_n}M^z_{n+1} $. By assumption, $\frac{a_n}{b_n}\rar 0$. Since $h_1$ is locally Lipschitz continuous, it is bounded on the event $\{\sup_n ||z_n||_2+||v_n||_2 <\infty\}$. Thus, $\frac{a_n}{b_n}h_1(z_n,v_n)\rar 0$ almost surely.

From Lemma 1 in Chapter 6 of \cite{borkar:book} , the above processes, on the event $\{\sup_n ||z_n||_2+||v_n||_2 <\infty\}$, converge almost surely to internally chain-transitive invariant sets of $\dot v=-h_2(z,v)$ and $\dot z=0$. Since, for a fixed $z$, $h_2(z,v)$ is a Lipschitz continuous function of $v$ with a globally asymptotically stable equilibrium at $(J^T(z)J(z)+\lambda(z)I)^{-1}\omega(z)$, the claim follows.
\end{proof}

Having shown that $||v_n-v^*(z_n)||_2 \rar 0$ almost surely, we now show that $z_n$ will asymptotically track a trajectory of the limiting ODE. Let us first define $z(t,s,z_s)$ for $t\ge s$ to be the trajectory of $\dot z=-h(z)$ starting at $z_s$ at time $s$.

\begin{theorem}
\label{thm:tts_asym}
Given Assumptions \ref{ass:fJ1}-\ref{ass:noise}, let $t_n=\sum_{i=0}^{n-1} a_i$. On the event $\{sup_n ||z_n||_2+||v_n||_2 <\infty\}$, for any integer $K>0$ we have:
\[ \lim_{n\rar \infty} \sup_{0\le h \le K} \ \|z_{n+h}-z(t_{n+h},t_n, z_n)\|_2=0.\]
\end{theorem}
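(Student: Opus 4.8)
The plan is to reduce the two-timescale recursion to a single-timescale stochastic approximation for the slow variable $z_n$ with drift $-h$, and then to invoke the asymptotic-pseudotrajectory machinery of \citet{benaim:1999ab} to upgrade this to the claimed uniform tracking of the flow $\dot z=-h(z)$. The entire argument takes place on the boundedness event $\{\sup_n \|z_n\|_2+\|v_n\|_2<\infty\}$, so that the iterates lie in a fixed compact set $C$ on which $\omega$, $J$, $\lambda$, and hence $h$, $h_1$, and $v^*$ are Lipschitz by Assumption~\ref{ass:fJ1}. Restricting to $C$ is the whole point: as the authors note, $h_1$, $h_2$, and $v^*$ are not globally Lipschitz, so the off-the-shelf tracking theorems do not apply until the iterates are confined to a compact region.

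First I would invoke Lemma~\ref{lemma:vconv}, which on this event gives $\|v_n-v^*(z_n)\|_2\to 0$ almost surely. Recalling that $v^*(z)$ is the globally asymptotically stable equilibrium of the fast dynamics and satisfies $h_1(z,v^*(z))=h(z)$, I would rewrite the slow update in the Robbins--Monro form
\[
z_{n+1}=z_n-a_n\bigl[h(z_n)+\vep_n+M^z_{n+1}\bigr],
\]
where $\vep_n:=h_1(z_n,v_n)-h(z_n)=\tfrac12 J^T(z_n)\bigl(v_n-v^*(z_n)\bigr)$ is an adapted drift perturbation. Since $J$ is bounded on $C$ and $v_n-v^*(z_n)\to 0$, we have $\vep_n\to 0$ almost surely; and $M^z_{n+1}$ is an $\mc{F}_n$-martingale difference obeying the conditional moment bound of Assumption~\ref{ass:noise}. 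This separates the effective noise $U_{n+1}:=\vep_n+M^z_{n+1}$ into a pathwise-vanishing bias plus a square-summable martingale.

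The heart of the argument is then to verify the single hypothesis of Proposition~4.1 of \citet{benaim:1999ab}: that for every $T>0$,
\[
\lim_{n\to\infty}\ \sup_{\{k\,:\,0\le t_k-t_n\le T\}}\ \Bigl\|\sum_{i=n}^{k-1} a_i\bigl(\vep_i+M^z_{i+1}\bigr)\Bigr\|_2=0 \quad\text{a.s.}
\]
The martingale contribution is handled by the martingale convergence theorem: on $C$ the increments have bounded conditional second moments (Assumption~\ref{ass:noise}) and $\sum_i a_i^2<\infty$ (Assumption~\ref{ass:ss1}), so $\sum_i a_i M^z_{i+1}$ is $L^2$-bounded and converges almost surely, whence the supremum of its tail increments tends to zero. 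The bias contribution is controlled deterministically window-by-window: $\bigl\|\sum_{i=n}^{k-1}a_i\vep_i\bigr\|_2\le\bigl(\sup_{i\ge n}\|\vep_i\|_2\bigr)\sum_{i=n}^{k-1}a_i\le T\,\sup_{i\ge n}\|\vep_i\|_2\to 0$, using $t_k-t_n\le T$ together with $\vep_i\to 0$. Combining the two pieces yields the displayed limit, and since $h$ is Lipschitz on $C$ its flow is well defined there, so Proposition~4.1 identifies the interpolated process of $\{z_n\}$ as an asymptotic pseudotrajectory of $\dot z=-h(z)$. This is exactly the claimed finite-horizon tracking estimate, the finitely many elapsed times $t_{n+h}-t_n$ for $0\le h\le K$ all lying in a bounded interval $[0,T]$.

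I expect the main obstacle to be bookkeeping rather than any single sharp estimate. Because none of $h_1$, $h_2$, $h$, or $v^*$ is globally Lipschitz, every bound above must be carried out relative to the compact set $C$ furnished by the boundedness event, and the final conclusion is therefore conditional on $\{\sup_n\|z_n\|_2+\|v_n\|_2<\infty\}$, as the theorem's hypothesis reflects. The subtler point is that $\vep_n$ is a genuinely adapted drift perturbation, not a martingale difference; it is admitted into Benaïm's framework only because it vanishes pathwise, which is why it is essential that Lemma~\ref{lemma:vconv} delivers almost-sure convergence of $v_n-v^*(z_n)$ rather than mere convergence in conditional mean.
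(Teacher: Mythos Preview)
Your proposal is correct and follows essentially the same route as the paper: invoke Lemma~\ref{lemma:vconv} to obtain $v_n-v^*(z_n)\to 0$ almost surely, rewrite the slow update as $z_{n+1}=z_n-a_n[h(z_n)+\vep_n+M^z_{n+1}]$ with a pathwise-vanishing adapted perturbation $\vep_n$, and then appeal to Propositions~4.1 and~4.2 of \citet{benaim:1999ab} on the boundedness event. The paper's version is terser---it bounds $\|J^T(z_n)\|_2$ by the global Lipschitz constant $L_\omega$ rather than by compactness of $C$, and it simply cites the combined proposition (reproduced in Appendix~\ref{app:props}) rather than verifying the windowed-sum condition by hand---but the decomposition, the key lemma, and the final invocation are identical to yours.
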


\begin{proof}
The proof makes use of Propositions 4.1 and 4.2  in \cite{benaim:1999ab} which are supplied in Appendix~\ref{app:props}. 

We first rewrite the $z_n$ process as:
\[ z_{n+1}=z_n-a_n\left[ h(z)-J^T(z_n)\left(v^*(z_n)-v_n\right)+ M^z_{n+1}\right].\]
We note that, from Lemma~\ref{lemma:vconv}, $\left(v^*(z_n)-v_n\right) \rar 0$ almost surely.  Since $||J^T(z_n)||_2 < L_{\omega}$, we can write this process as:
\[ z_{n+1}=z_n-a_n\left[ h(z)-\chi_n+ M^z_{n+1}\right],\]
where $\chi_n \rar 0$ almost surely. Since $h$ is continuously differentiable, it is locally Lipschitz, and on the event $\{sup_n ||z_n||+||v_n|| <\infty\}$ it is bounded. It thus induces a continuous globally integrable  vector field, and therefore satisfies the assumptions for Propositions 4.1 in \cite{benaim:1999ab}. Further, by assumption the sequence of step sizes and martingale difference sequences satisfy the assumptions of Proposition 4.2 in \cite{benaim:1999ab}. Invoking Proposition 4.1 and 4.2 in \cite{benaim:1999ab} gives us the desired result. 
\end{proof}

Theorem~\ref{thm:tts_asym} guarantees that LSS asymptotically tracks a trajectory of the limiting ODE. The approximation will therefore avoid non-Nash equilibria of the gradient dynamics. Further, the only locally asymptotically stable points for LSS must be the differential Nash equilibria of the game.


\section{Numerical Examples}
\label{sec:num}
We now present two numerical examples that illustrate the performance of both the limiting ODE and LSS. The first is a zero-sum game played over a function in $\mb{R}^2$ that allows us to observe the behavior of both the limiting ODE around both local Nash and non-Nash equilibria. In the second example we use LSS to train a small generative adversarial network (GAN) to learn a mixture of eight Gaussians. Further numerical experiments and comments are provided in Appendix~\ref{app:numex}.

\subsection{2-D example}
For the first example, we consider the game based on the following function $f$ in $\mb{R}^2$ :
\[ f(x,y)= e^{-0.01(x^2+y^2)}((0.3x^2+y)^2+(0.5y^2+x)^2).\]
This function is a fourth-order polynomial that is scaled by an exponential to ensure that it is bounded. The gradient dynamics $\dot z=-\omega(z)$ associated with function have four LASE. By evaluating the Jacobian of $\omega$ at these points we find that three of the LASE are local Nash equilibria. These are denoted by `x' in Figure~\ref{fig:toy1}. The fourth LASE is a non-Nash equilibrium which is denoted with a star. In Figure~\ref{fig:toy1}, we plot the sample paths of both simGD and our limiting ODE from the same initial positions, shown with red dots. We clearly see that simGD converges to all four LASE, depending on the initialization. Our algorithm, on the other hand, only converges to the local Nash equilibria. When initialized close to the non-Nash equilibrium it diverges from the simGD path and ends up converging to a LNE. 
\begin{figure}[h]
\center    
      \includegraphics[width=0.75\columnwidth]{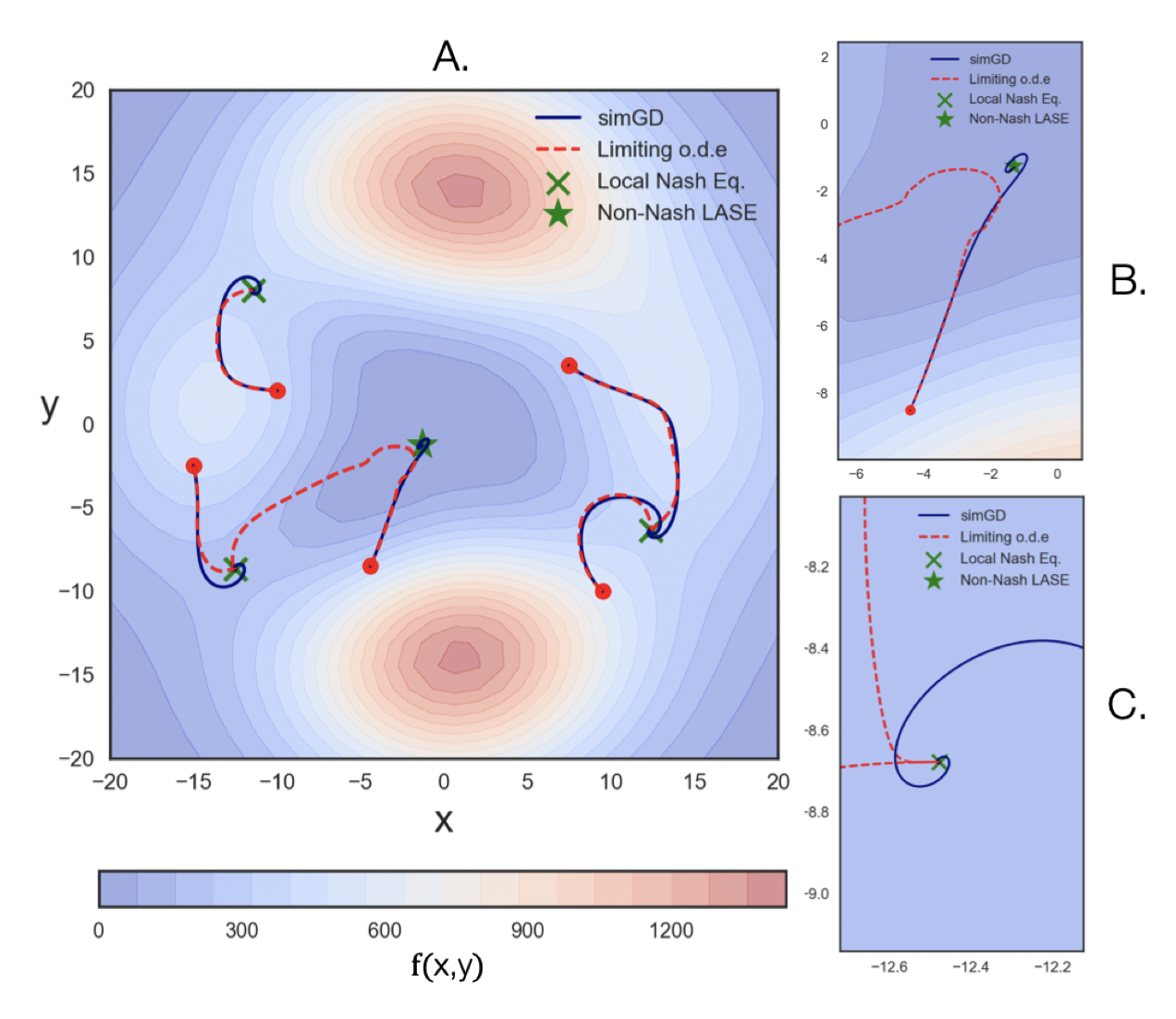}
        \caption{A. Convergence of simGD and the limiting ODE \eqref{eq:ctalg} to different LASE of the game. The local Nash equilibria are denoted by 'x' and the non-Nash equilibrium by a '*'. The heat map is used to show that this game is being played over a function and the Nash equilibria and non-Nash equilibria are all at different payoffs. B. From the same initialization where simGD converges to the non-Nash equilibrium, the ODE converges instead to a local Nash equilibria. C. We empirically validate that the ODE does not exhibit oscillatory behaviors near local Nash equilibria.}
  \label{fig:toy1}
\end{figure}
This numerical example also allows us to study the behavior of our algorithm around LASE. By focusing on a local Nash equilibrium, as in Figure~\ref{fig:toy1}B, we observe that the limiting ODE approaches it directly even when simGD displays oscillatory behaviors. This empirically validates the second part of Theorem~\ref{thm:algprops}.

In Figure~\ref{fig:toy2} we empirically validate that LSS asymptotically tracks the limiting ODE. When the fast timescale has not converged, the process tracks the gradient dynamics. Once it has converged however, we see that it closely tracks the limiting ODE which leads it to converge to only the local Nash equilibria. This behavior highlights an issue with the two-timescale approach. Since the non-Nash equilibria of the gradient dynamics are saddle points for the new dynamics they can slow down convergence. However, the process will eventually escape such points  \citep{benaim:1999ab}. 

\begin{figure}[h]
\center    
      \includegraphics[width=0.75\columnwidth]{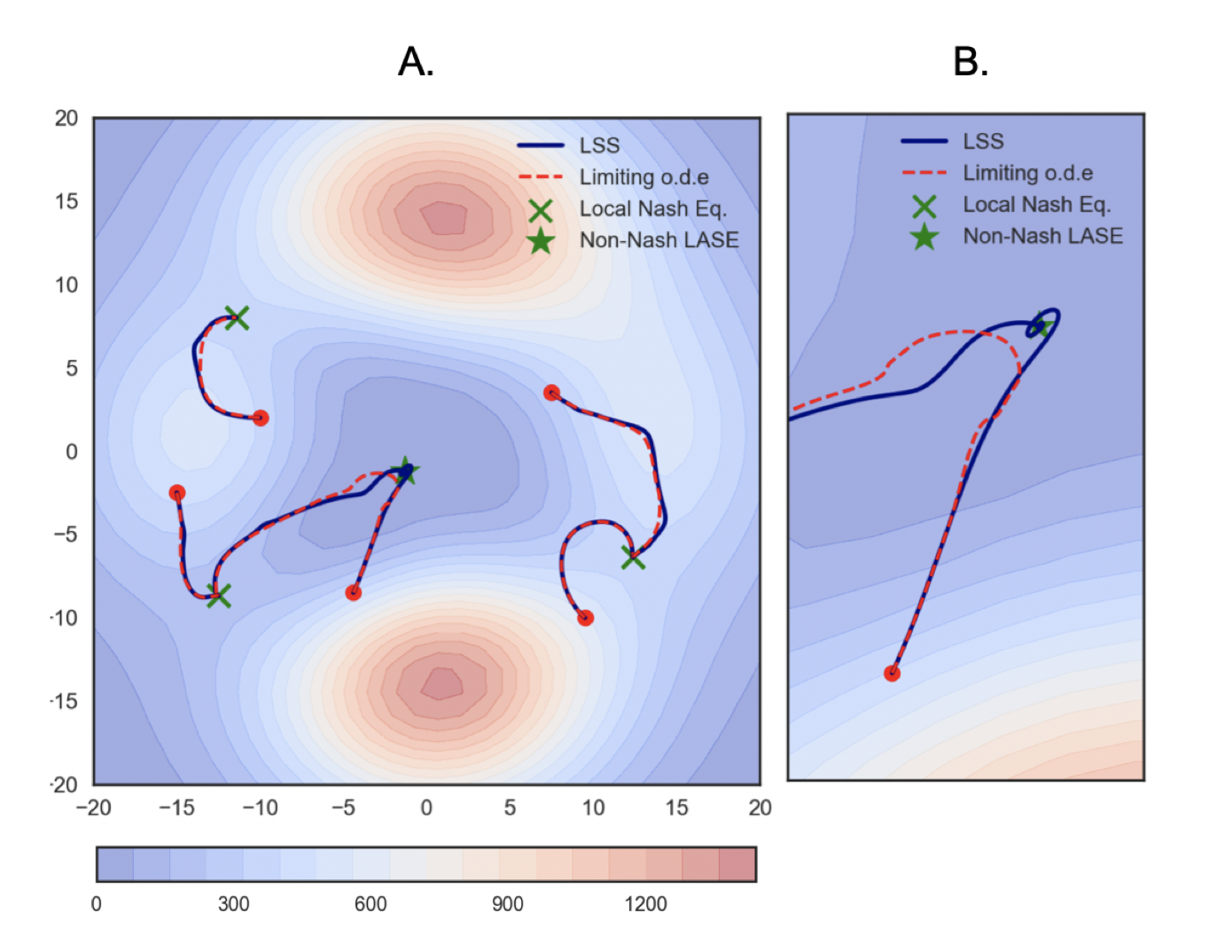}
        \caption{Behavior of the two-timescale approximation with respect to the limiting ODE. A. The two-timescale procedure closely tracks the limiting ODE from all four initializations. B. Focusing on the non-Nash equilibrium, the two-timescale procedure tracks the gradient-dynamics closely which leads it to begin oscillating around the equilibrium. Once the fast timescale has converged however, the process quickly exits the neighborhood of the non-Nash equilibrium and closely tracks the limiting ODE.}
  \label{fig:toy2}
\end{figure}

In our numerical experiments we let $\lambda(z)=\xi_1\left(1-e^{-||\omega(z)||^2} \right)$. We also make use of a damping function $g$ as described in Section~\ref{sec:alg}. The limiting ODE is therefore given by:
\[ \dot z=-(\omega(z)+e^{-\xi_2 ||v||^2}v),\]
where $v=J^T(z)(J^T(z)J(z)+\lambda(z)I)^{-1}J^T(z)\omega(z)$. For the two-timescale process, since there is no noise we use constant step sizes and the following update:
\begin{align*}
z_{n+1}&=z_n-\gamma_1(\omega(z_n)+e^{-\xi_2 ||J^T(z_n)v_n||^2}J^T(z_n)v_n) \\
v_{n+1}&=v_n-\gamma_2(J^T(z_n)J(z_ n)v_n+\lambda(z_n)v_n-J^T(z_n)\omega(z_n)),
\end{align*}
where $\xi_1=1e-4$,$\xi_2=1e-4$,$\gamma_1=0.004$, and $\gamma_2=0.005$.

\subsection{Generative adversarial network}

We now train a generative adversarial network with LSS. Both the discriminator and generator are fully connected neural networks with four hidden layers of 16 neurons each. The tanh activation function is used since it satisfies the smoothness assumptions for our functions. For the latent space, we use a 16-dimensional Gaussian with mean zero and covariance $\Sigma=0.1 I_{16}$. The ground truth distribution is a mixture of eight Gaussians with their modes uniformly spaced around the unit circle and covariance $\Sigma=1e-4 I_2$.
\begin{figure}[h]
\center    
      \includegraphics[width=\columnwidth]{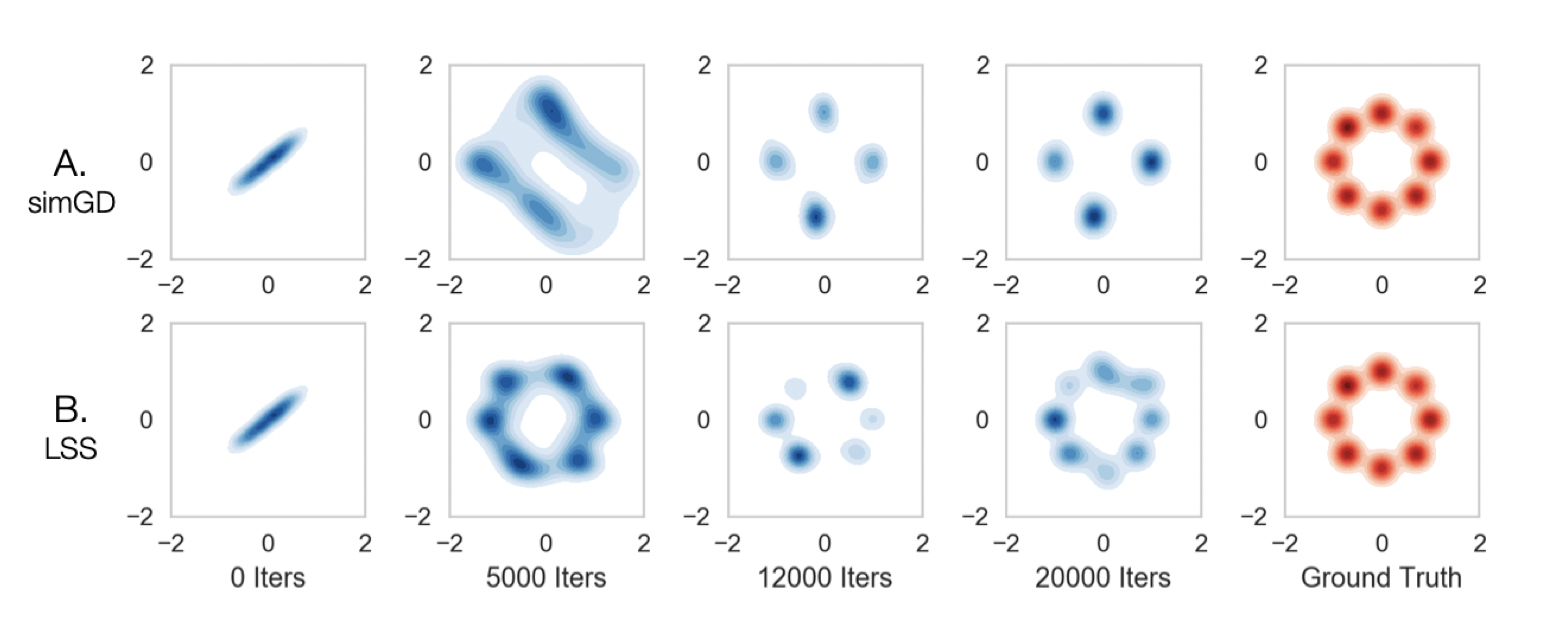}
        \caption{A GAN trained with A. simGD and B. LSS.}
  \label{fig:gan}
\end{figure}
In Figure~\ref{fig:gan}, we show the progression of the generator at $0$, $5000$, $12000$, and $20000$ iterations for a GAN initialized with the same weights and biases and then trained with A. simGD and B. LSS. We can see empirically that, in this example, LSS converges to the true distribution while simGD quickly suffers mode collapse, showing how the adjusted dynamics can lead to convergence to better equilibria. Further numerical experiments are shown in Appendix~\ref{app:numex}.

We caution that convergence rate per se is not necessarily a reasonable metric on which to compare performance in the GAN setting or in other game-theoretic settings.  Competing algorithms may converge faster than our method when used to train GANs, but once because the competitors could be converging quickly to a non-Nash equilibrium, which is not desirable. Indeed, the optimal solution is known to be a local Nash equilibrium for GANs \citep{goodfellow:2014ab,paper:Kolter}. LSS may initially move towards a non-Nash equilibrium, while subsequently escaping the neighborhood of such points before converging. This will lead to a slower convergence rate, but a better quality solution.

\section{Discussion}
\label{discussion}

We have introduced local symplectic surgery, a new two-timescale algorithm for finding the local Nash equilibria of two-player zero-sum continuous games. We have established that this comes with the guarantee that the only hyperbolic critical points to which it can converge are the local Nash equilibria of the underlying game. This significantly improves upon previous methods for finding such points which, as shown in Appendix~\ref{app:counter}, cannot give such guarantees. We have analyzed the asymptotic properties of the proposed algorithm and have shown that the algorithm can be implemented efficiently. Altogether, these results show that the proposed algorithm yields limit points with game-theoretic relevance while ruling out oscillations near those equilibria and having a similar per-iteration complexity as existing methods which do not come with the same guarantees. Our numerical examples allow us to empirically observe these properties.

It is important to emphasize that our analysis has been limited to neighborhoods of equilibria; the proposed algorithm can  converge in principle to limit cycles at other locations of the space.  These are hard to rule out completely. Moreover, some of these limit cycles may actually have some game-theoretic relevance \citep{hommes:2012aa,benaim:1997ab}. Another limitation of our analysis is that we have assumed the existence of local Nash equilibria in games. Showing that they exist and finding them is very hard to do in general. Our algorithm will converge to local Nash equilibria, but may diverge when the game does not admit equilibria or when the algorithm does not come any equilibria its region of attraction. Thus, divergence of our algorithm is not a certificate that no equilibria exist. Such caveats, however, are the same as those for other gradient-based approaches for finding local Nash equilibria. 

Another drawback to our approach is the use of second-order information. Though the two-timescale approximation does not need access to the full Jacobian of the gradient dynamics, the update does involve computing Jacobian-vector products. This is similar to other recently proposed approaches but will be inherently slower to compute than pure first- or zeroth-order methods. Bridging this gap while retaining similar theoretical properties remains an interesting avenue of further research.

In all, we have shown that some of the inherent flaws to gradient-based methods in zero-sum games can be overcome by designing our algorithms to take advantage of the game-theoretic setting. Indeed, by using the structure of local Nash equilibria we designed an algorithm that has significantly stronger theoretical support than existing approaches.

\bibliography{JMLR18}

\appendix
\section{Time-varying adjustment}
\label{app:tv}

In this section we analyze a slightly different version of \eqref{eq:ctalg} that allows us to remove the assumption that $\omega(z)$ is never an eigenvector of $J^T(z)(J^T(z)J(z)+\lambda(z)I)^{-1}J^T(z)$ with associated eigenvalue $-1$. Though this assumption is relatively mild, since intuitively it will be very rare that $\omega$ is exactly the eigenvector of the adjustment matrix, we show that by adding a third term to \eqref{eq:ctalg} we can remove it entirely while retaining our theoretical guarantees. The new dynamics are constructed by adding a time-varying term to the dynamics that goes to zero only when $\omega(z)$ is zero. This guarantees that the only critical points of the limiting dynamics are the critical points of $\omega$. The analysis of these dynamics is slightly more involved and requires generalizations of the definition of a LASE to handle time-varying dynamics. We first define an equilibrium of a potentially time-varying dynamical system $\dot\theta=g(\theta,t)$ as a point $\theta^*$ such that $g(\theta^*,t)\equiv0$ for all $t\ge0$. We can now generalize the definition of a LASE to the time-varying setting.

\begin{definition}
A strategy $\theta^*\in \mb{R}^d$ is a \emph{locally uniformly asymptotically stable equilibrium} of the time-varying continuous time dynamics
$\dot \theta=-f(\theta,t)$ if $\theta^*$ is an equilibrium of $\dot \theta=-f(\theta,t)$, and $D_\theta f(\theta^*,t)\equiv J(\theta^*)$ and $Re(\lambda)>0$ for all $\lambda\in\spec(J(z^*))$.    
\end{definition}

Locally uniformly asymptotically stable equilibria, under this definition, also have the property that they are locally exponentially attracting under the flow, $\dot \theta=-f(\theta,t)$. Further, since the linearization around a locally uniformly asymptotically stable equilibrium is time-invariant, we can still invoke converse Lyapunov theorems like those presented in \cite{sastry:1999aa} when deriving the non-asymptotic bounds. 

Having defined equilibria and a generalization of LASE for time-varying systems, we now introduce a time-varying version of the continuous-time ODE presented in Section~\ref{sec:alg} which allows us to remove the assumption that $\omega(z)$ is never an eigenvector of $J^T(z)(J^T(z)J(z)+\lambda(z)I)^{-1}J^T(z)$ with associated eigenvalue $-1$. The limiting ODE is given by:
\begin{align}
\dot z=-h_{TV}(z,t)=-(h(z)+g_{TV}(z,t)),
\label{eq:TV_alg}
\end{align}
where $h(z)$ is as described in Section~\ref{sec:alg}, $g(z,t)$ can be decomposed as:
\[ g_{TV}(z,t)=\lambda_1(z)u(t),\]
where $\lambda_1 \in \mc{C}^2(\mb{R}^d,\mb{R})$ satisfies:
\begin{itemize}
\item $0\le\lambda_1(z)\le \xi_2$ for all $z \in \mb{R}^d$.
\item $\lambda_1(z)=0 \iff \omega(z)=0$.
\item $\omega(z)=0 \implies D_z\lambda_1(z)=0$,
\end{itemize}
and where $u:\mb{R}_+\rar\mb{R}^d$ satisfies:
\begin{itemize}
\item $\nexists \ \ t_0\ge0$ such that $D_t u(t)\equiv 0 \ \ \forall \ \ t\ge t_0$.
\item $||u(t)||\le \xi_3 \ \ \forall \ \ t\ge0$.
\end{itemize}

Thus we require that the time-varying adjustment term $g_{TV}$ must be bounded and is equal to zero only when $\omega(z)=0$. Most importantly, we require that for any $z$ that is not a critical point of $\omega$, $g_{TV}$ must be changing in time. An example of a $g_{TV}$ that satisfies these requirements is:
\begin{align}
g_{TV}(z,t)=\xi_1\left(1-e^{-\xi_2||\omega(z)||^2} \right)\cos(t)u_0,
\label{eq:tvadj}
\end{align}
for $u_0 \in \mb{R}^d$, $u_0\ne 0$ and $\xi_1,\xi_2>0$.

These conditions, as the next theorem shows, allow us to guarantee that the only locally asymptotically stable equilibria are the differential Nash equilibria of the game.

\begin{theorem}
\label{thm:tv}
Under Assumption~\ref{ass:fJ1} the continuous-time dynamical system $\dot z=-h_{TV}(z,t)$ satisfies:
\begin{itemize}
\item $z$ is a locally uniformly asymptotically stable equilibrium of $\dot z=-h_{TV}(z,t)$  $\iff$ $z$ is a DNE of the game $\{ (f,-f),\mathbb{R}^d\}$.
\item If $z$  is an equilibrium point of $\dot z=-h_{TV}(z,t)$, then the Jacobian of $h_{TV}$ at $z$ is time-invariant and has real eigenvalues. 
\end{itemize}
\end{theorem}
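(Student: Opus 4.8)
The plan is to mirror the structure of the proof of Theorem~\ref{thm:algprops} while carefully accounting for the extra time-varying term $g_{TV}(z,t)=\lambda_1(z)u(t)$. The first step is to identify the equilibria: an equilibrium $z^*$ must satisfy $h_{TV}(z^*,t)\equiv 0$ for \emph{all} $t\ge 0$, i.e.\ $h(z^*)+\lambda_1(z^*)u(t)\equiv 0$. Here I would invoke the condition that $u(t)$ is genuinely time-varying (there is no $t_0$ with $D_t u(t)\equiv 0$ for all $t\ge t_0$). If $\lambda_1(z^*)\neq 0$, then $\lambda_1(z^*)u(t)$ would have to be constant in $t$ to cancel the time-independent $h(z^*)$, which contradicts the nonconstancy of $u$. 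Hence $\lambda_1(z^*)=0$, which by the defining property $\lambda_1(z)=0\iff\omega(z)=0$ forces $\omega(z^*)=0$, and then $h(z^*)=0$ as well by Theorem~\ref{thm:algprops}. Conversely, $\omega(z^*)=0$ gives both $h(z^*)=0$ and $\lambda_1(z^*)=0$, so $h_{TV}(z^*,t)\equiv 0$. This establishes that the equilibria of the time-varying system are exactly the critical points of $\omega$, with no assumption on $\omega$ being an eigenvector needed, since the oscillating term cannot create a new static equilibrium.

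The second step is to compute the Jacobian $D_z h_{TV}(z^*,t)$ at an equilibrium and show it is time-invariant and symmetric. Writing $D_z h_{TV}(z,t)=J_h(z)+D_z[\lambda_1(z)u(t)]=J_h(z)+u(t)\,(D_z\lambda_1(z))^T$, I would use the third defining property of $\lambda_1$, namely $\omega(z)=0\implies D_z\lambda_1(z)=0$. Since every equilibrium $z^*$ satisfies $\omega(z^*)=0$, the rank-one time-varying contribution $u(t)(D_z\lambda_1(z^*))^T$ vanishes identically. Therefore $D_z h_{TV}(z^*,t)\equiv J_h(z^*)=S(z^*)$, exactly the symmetric matrix from the proof of Theorem~\ref{thm:algprops}. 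This simultaneously verifies the time-invariance of the linearization (as required by the definition of a locally uniformly asymptotically stable equilibrium) and, being symmetric, shows it has real eigenvalues, giving the second bullet of the theorem.

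The third step is the equivalence with DNE. Because the linearization at any equilibrium equals $S(z^*)=\diag(D^2_{xx}f,-D^2_{yy}f)$, the stability condition $Re(\lambda)>0$ for all $\lambda\in\spec(J(z^*))$ reduces to $S(z^*)\succ 0$, i.e.\ $D^2_{xx}f(z^*)\succ 0$ and $D^2_{yy}f(z^*)\prec 0$. Together with $\omega(z^*)=0$ these are precisely the conditions of Definition~\ref{def:DNE}, so $z^*$ is a locally uniformly asymptotically stable equilibrium if and only if it is a DNE. The one genuine subtlety, and the step I expect to require the most care, is justifying that the definition of locally uniformly asymptotically stable equilibrium really does capture exponential attraction here: I must confirm that because the linearization $D_z h_{TV}(z^*,t)$ is time-invariant (as shown above), the standard converse Lyapunov / linearization argument for time-varying systems (the theorems of \cite{sastry:1999aa} cited in the appendix) applies, so that $S(z^*)\succ 0$ indeed yields local uniform asymptotic stability rather than merely linear stability of a frozen system. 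The potential gap is ensuring the higher-order remainder terms of $h_{TV}$, which \emph{do} depend on $t$ through $u(t)$, remain uniformly bounded and higher-order in a neighborhood; this follows from $\|u(t)\|\le\xi_3$ and the smoothness of $\lambda_1$, but it is the place where the time-varying structure must be handled rather than waved away.
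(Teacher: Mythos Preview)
Your proposal is correct and follows essentially the same approach as the paper: first using the genuine time-variation of $u(t)$ to show that $h_{TV}(z,t)\equiv 0$ forces $\lambda_1(z)=0$ and hence $\omega(z)=0$, then using $D_z\lambda_1(z^*)=0$ at critical points to reduce the Jacobian to $S(z^*)$ and conclude as in Theorem~\ref{thm:algprops}. Your final paragraph on the uniform boundedness of the time-dependent remainder is extra care that the paper does not spell out, but it does not change the argument.
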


\begin{proof} 
We first show that:
\[ h_{TV}(z,t)\equiv 0 \ \ \forall t\ge0 \iff \omega(z)=0.\] 
By construction $\omega(z)=0 \implies h_{TV}(z,t)\equiv 0 \ \ \forall t\ge0$. 
To show the converse, we assume that there exists a $z$ such that $h_{TV}(z,t)\equiv0 \ \ \forall \ \ t\ge0$ but $\omega(z)\ne0$. This implies that:
\begin{align*}
-g_{TV}(z,t)&=\omega(z)+ J^T(z)\left(J^T(z)J(z)+\lambda(z)I\right)^{-1}J^T(z)\omega(z) \quad \forall t\ge0.
\end{align*}
Since $z$ is a constant and $ \lambda_1(z)>0$, taking the derivative of both sides with respect to $t$ gives us the following condition on $g$ under our assumption:
\[ D_tu(t)=0 \ \ \forall t\ge0.\]
By assumption this cannot be true. Thus, we have a contradiction and $h_{TV}(z,t)\equiv 0 \ \ \forall t\ge0 \implies \omega(z)=0$. 

Having shown that the critical points of $\dot z=-h_{TV}(z,t)$ are the same as that of $\dot z=-\omega(z)$, we now note that the Jacobian of $h_{TV}(z,t)$, at critical points, must be $S(z)$. Under the same development as the proof of Theorem~\ref{thm:algprops} the Jacobian of $h_{TV}$ is given by:
\[ J_{TV}(z)=J(z)+J^T(z)+(D_zg_{TV}(z,t)).\]

Again, by construction $D_zg_{TV}(z,t)=0$ when $\omega(z)=0$. The third term therefore disappears and we have that $J_{TV}(z)=S(z)$. The proof now follows from that of Theorem~\ref{thm:algprops}.
\end{proof}

We have shown that adding a time-varying term to the original adjusted dynamics allows us to remove the assumption that the adjustment term is never exactly $-\omega(z)$. As in Section~\ref{sec:alg} we can now construct a two-timescale process that asymptotically tracks \eqref{eq:TV_alg}. We assume that $u(t)$ is a deterministic function of a trajectory of an ODE: 
\[\dot \theta=-h_3(\theta),\]
with a fixed initial condition $\theta_0$ such that $u(t)=g_{TV}(\theta(t))$. We assume that $f$ is Lipschitz-continuous and $g_{TV}$ is continuous and bounded. Note that under our assumptions, $||g_{TV}(\theta)||\le \xi_3$ for all $\theta \in \mb{R}^{d_\theta}$.

The form of $u(t)$ introduced in \eqref{eq:tvadj}, $u(t)=\cos(t)u_0$ can be written as $u(t)=([1,0]*\theta(t))u_0$, where $\theta$ satisfies the linear dynamical system:
\[\dot \theta=\bmat{0 & -1 \\ 1 & 0}\theta, \]
with $\theta_0=[1,0]$. 

Given this setup, the continuous-time dynamics can be written as:
\begin{align}
\begin{split}
\label{eq:TV_alg2}
\dot \theta&=-h_3(\theta)\\
\dot z&=-h_4(z,\theta),
\end{split}
\end{align}
where:
\begin{align*}
h_4(z,\theta)&=\frac{1}{2}(\omega(z)+J^T(z)\left(J^T(z)J(z)+\lambda(z)I\right)^{-1}J^T(z)\omega(z)+\lambda_1(z)u(\theta)). 
\end{align*}

 Having made this further assumption on the time-varying term, we now introduce the two-timescale process that asymptotically tracks \eqref{eq:TV_alg2}. This process is given by:
\begin{align}
\begin{split}
	\label{eq:tts_tv}
	\theta_{n+1}&=\theta_n-a_nh_3(\theta_n)\\
    z_{n+1}&=z_n-{a_n}\hat h_5(z_n,v_n,\theta_n)\\
    v_{n+1}&=v_n-b_n\hat h_6(z_n,v_n),
\end{split}
\end{align}
where $\theta_0=\bar \theta$
\begin{align*}
    \mb{E}[\hat h_5(z,v,\theta)]&=h_5(z,v,\theta):=\frac{1}{2}\left(\omega(z)+ J^T(z)v\right)+\lambda_1(z)u(\theta)\\
    \mb{E}[\hat h_6(z,v)]&=h_6(z,v):=J^T(z)J(z)v-J^T(z)\omega(z)+\lambda(z)v.
\end{align*}

Proceeding as in Section~\ref{sec:alg}, we write $\hat h_5(z,v,\theta)=h_5(z,v,\theta)+M^z$ and $\hat h_6(z,v)=h_6(z,v)+M^v$ where $M^z$ and $M^v$ are martingale difference sequences satisfying Assumption~\ref{ass:noise}. We note that the $\theta_n$ process is deterministic.

This two-timescale process gives rise to the time-varying version of local symplectic surgery (TVLSS) outlined in Algorithm~\ref{alg:alg2}.
\begin{algorithm}[tb]
   \caption{Time-varying Local Symplectic Surgery}
   \label{alg:alg2}
\begin{algorithmic}
   \STATE {\bfseries Input} Functions $f$, $\omega$, $J$, $
   \lambda$, $g_{TV}$; Step sizes $a_n$, $b_n$; Initial values $(x_0,y_0,v_0)$
   \STATE \textbf{Initialize} $(x,y,v,n) \leftarrow (x_0, y_0, v_0,0)$
   \WHILE{not converged}
   		  \STATE $g_x\leftarrow D_x \left[f(x,y)+\omega^T(x,y)v\right]$ 
        \STATE $g_y\leftarrow D_y \left[ -f(x,y)+\omega^T(x,y)v \right]$ 
        \STATE $g_v \leftarrow D_v \left[ ||J(x,y)v-\omega(x,y)||_2^2+\lambda(x,y)||v||_2^2 \right]$
        \STATE $x \leftarrow x-a_n (g_x+g_{TV}(x,y,n))$
        \STATE $y \leftarrow y-a_n (g_y+g_{TV}(x,y,n))$
        \STATE $v \leftarrow v-b_n g_v$
        \STATE $n \leftarrow n+1$
   \ENDWHILE
   \STATE \textbf{Output} $(x^*,y^*) \leftarrow (x,y)$
\end{algorithmic}
\end{algorithm}

 We can now proceed as in Section~\ref{sec:thms} to show that \eqref{eq:tts_tv} has the same limiting behavior as the limiting ODE in \eqref{eq:TV_alg2}. 

\begin{lemma}
\label{lemma:vconv_tv}
Under Assumptions \ref{ass:fJ1}-\ref{ass:noise}, and on the event $\{\sup_n ||z_n||+||v_n||+||\theta_n|| <\infty\}$: 
\[ (\theta_n,z_n,v_n)\rar \{(\theta,z,v^*(z)): (z, \theta) \in \mb{R}^d \times \mb{R}^{d_\theta}\}\] 
almost surely.
\end{lemma}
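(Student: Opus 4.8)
The plan is to replicate the two-timescale argument used for Lemma~\ref{lemma:vconv}, now treating both $\theta_n$ and $z_n$ as slow processes (both driven by the step size $a_n$) and $v_n$ as the single fast process (driven by $b_n$). First I would rewrite~\eqref{eq:tts_tv} on the fast timescale by factoring $b_n$ out of every update:
\begin{align*}
\theta_{n+1}&=\theta_n-b_n\left[\tfrac{a_n}{b_n}h_3(\theta_n)\right]\\
z_{n+1}&=z_n-b_n\left[\tfrac{a_n}{b_n}h_5(z_n,v_n,\theta_n)+\bar M^z_{n+1}\right]\\
v_{n+1}&=v_n-b_n\left[h_6(z_n,v_n)+M^v_{n+1}\right],
\end{align*}
where $\bar M^z_{n+1}=\tfrac{a_n}{b_n}M^z_{n+1}$. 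Since $a_n/b_n\rar0$ by Assumption~\ref{ass:ss1}, the goal is to show that both slow drifts $\tfrac{a_n}{b_n}h_3(\theta_n)$ and $\tfrac{a_n}{b_n}h_5(z_n,v_n,\theta_n)$ vanish almost surely on the event $\{\sup_n\|z_n\|+\|v_n\|+\|\theta_n\|<\infty\}$, so that $\theta_n$ and $z_n$ are quasi-static relative to $v_n$.

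Second, I would verify the boundedness that makes these drifts vanish. The map $h_3$ is the linear (skew-symmetric) vector field governing $\theta$, hence bounded on the bounded event; the map $h_5$ splits as $\tfrac12(\omega(z)+J^T(z)v)$, which is locally Lipschitz and therefore bounded on the event exactly as $h_1$ was in Lemma~\ref{lemma:vconv}, plus the extra term $\lambda_1(z)u(\theta)$, which is bounded globally by $\xi_2\xi_3$ using $0\le\lambda_1(z)\le\xi_2$ and $\|u\|\le\xi_3$. Combined with $a_n/b_n\rar0$, both slow increments tend to zero, while $\bar M^z$ remains a square-summable martingale-difference noise.

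Third, I would invoke Lemma 1 of Chapter 6 of \cite{borkar:book}, as in Lemma~\ref{lemma:vconv}, to conclude that, on the bounded event, the joint process converges almost surely to the internally chain-transitive invariant sets of the limiting system $\dot\theta=0$, $\dot z=0$, $\dot v=-h_6(z,v)$. The crucial observation is that $h_6(z,v)=J^T(z)J(z)v-J^T(z)\omega(z)+\lambda(z)v$ does not depend on $\theta$, so freezing the slow variables leaves exactly the fast dynamics analyzed before: for each fixed $z$, $h_6(z,\cdot)$ is a Lipschitz function of $v$ whose coefficient matrix $J^T(z)J(z)+\lambda(z)I$ is symmetric positive definite, giving the unique globally asymptotically stable equilibrium $v^*(z)$. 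Hence the only internally chain-transitive invariant sets are $\{(\theta,z,v^*(z))\}$ with $\theta$ and $z$ unconstrained, which is precisely the claimed limit set.

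The step I expect to require the most care is the quasi-static justification once the additional time-varying $\theta$ process is present: I must confirm that $\theta_n$ genuinely behaves as a frozen parameter on the fast timescale and, more importantly, that it never enters the fast ODE, so that the globally asymptotically stable equilibrium $v^*(z)$ stays independent of $\theta$. As in Lemma~\ref{lemma:vconv}, none of $h_5$, $h_6$, or $v^*$ is uniformly Lipschitz in $(z,v)$, so this reasoning is valid only after restricting to the event $\{\sup_n\|z_n\|+\|v_n\|+\|\theta_n\|<\infty\}$, where local Lipschitz continuity upgrades to boundedness; carrying that restriction through every estimate is the main bookkeeping obstacle rather than a conceptual one.
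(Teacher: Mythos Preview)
Your proposal is correct and follows essentially the same approach as the paper, which simply states that the proof is identical to that of Lemma~\ref{lemma:vconv} with the extra slow variable $\theta_n$ handled in the obvious way. The one small overreach is describing $h_3$ as the specific linear skew-symmetric field from the example rather than a general Lipschitz vector field, but this is immaterial: on the bounded event any continuous $h_3$ is bounded, so $\tfrac{a_n}{b_n}h_3(\theta_n)\to 0$ regardless.
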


The proof of this lemma is exactly the same as that of Lemma~\ref{lemma:vconv}, and we do not repeat the argument. We can now invoke Proposition 4.1 and 4.2 from \cite{benaim:1999ab} to show that the two-timescale process tracks a trajectory of the limiting ODE. For $t \ge s$, let $\theta(t,s,\theta_s)$ and $z(t,s,z_s)$ be the trajectories of the dynamical systems introduced in \eqref{eq:TV_alg2}, starting at states $\theta_s$ and $z_s$ respectively at time $s$.

\begin{theorem}
\label{thm:tts_asym_tv}
Let Assumptions \ref{ass:fJ1}-\ref{ass:noise} hold.  Further, assume $h_3$ is Lipschitz continuous and define $t_n=\sum_{i=0}^{n-1} a_i$. Then on the event $\{sup_n ||z_n||+||v_n||+|\theta_n|| <\infty\}$, for any integer $K>0$:
\[ \lim_{n\rar \infty} \sup_{0\le h \le K} \ ||z_{n+h}-z(t_{n+h},t_n, z_n)\|=0\]

\[ \lim_{n\rar \infty} \sup_{0\le h \le K} \ ||\theta_{n+h}-\theta(t_{n+h},t_n, z_n)\|=0.\]
\end{theorem}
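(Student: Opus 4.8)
The plan is to reduce this to the already-proved Theorem~\ref{thm:tts_asym} by folding the deterministic component $\theta_n$ and the slow iterate $z_n$ into a single slow variable, since both evolve on the timescale $\{a_n\}$. First I would set $w_n=(\theta_n,z_n)\in\mb{R}^{d_\theta}\times\mb{R}^d$ and note that \eqref{eq:tts_tv} drives $w_n$ with the coupled mean field $H(\theta,z):=(h_3(\theta),h_4(z,\theta))$, whose flow is exactly the limiting ODE \eqref{eq:TV_alg2}. The fast variable has already been dealt with: Lemma~\ref{lemma:vconv_tv} gives $v_n-v^*(z_n)\rar0$ almost surely on the bounded event, so the influence of $v$ on the slow dynamics is asymptotically that of its quasi-static equilibrium $v^*(z_n)$.

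Next I would rewrite the $z_n$-update to expose the limiting drift plus a vanishing perturbation and the martingale noise. Writing $\hat h_5(z_n,v_n,\theta_n)=h_5(z_n,v_n,\theta_n)+M^z_{n+1}$ and using that $h_5(z_n,v^*(z_n),\theta_n)$ is precisely the slow field $h_4(z_n,\theta_n)$ of \eqref{eq:TV_alg2}, the update becomes
\[ z_{n+1}=z_n-a_n\left[h_4(z_n,\theta_n)-\chi_n+M^z_{n+1}\right],\]
where $\chi_n:=\tfrac{1}{2} J^T(z_n)(v^*(z_n)-v_n)$. Since $\|J^T(z_n)\|_2\le L_\omega$ under Assumption~\ref{ass:fJ1} and $v^*(z_n)-v_n\rar0$, we have $\chi_n\rar0$ almost surely. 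The $\theta_n$-recursion carries no noise and is a clean Euler step of $\dot\theta=-h_3(\theta)$, so stacking the two coordinates yields
\[ w_{n+1}=w_n-a_n\left[H(w_n)-(0,\chi_n)+(0,M^z_{n+1})\right],\]
a stochastic approximation in the standard form of \cite{benaim:1999ab}, with mean field $H$, vanishing perturbation $(0,\chi_n)$, and martingale difference $(0,M^z_{n+1})$.

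Then I would verify that $H$, the step sizes, and the noise meet the hypotheses of Propositions~4.1 and 4.2 of \cite{benaim:1999ab}. Because $f\in\mc{C}^3$, $\lambda,\lambda_1\in\mc{C}^2$ are bounded, $u$ is bounded by $\xi_3$ and Lipschitz (it is the output of the Lipschitz ODE $\dot\theta=-h_3(\theta)$), and $h_3$ is Lipschitz by assumption, the coupled field $H$ is continuous and locally Lipschitz; on $\{\sup_n\|z_n\|+\|v_n\|+\|\theta_n\|<\infty\}$ it is bounded and hence induces a globally integrable vector field, giving Proposition~4.1. The step sizes satisfy Assumption~\ref{ass:ss1} and the noise satisfies Assumption~\ref{ass:noise}, so the martingale term is asymptotically negligible by Proposition~4.2, and the perturbation $\chi_n\rar0$ is absorbed identically. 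Invoking both propositions shows that the interpolated trajectory of $w_n$ is an asymptotic pseudotrajectory of the flow of $-H$, which delivers the claimed uniform tracking of $z_n$ and $\theta_n$ over windows of length $K$.

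The only nonroutine point---and hence the main obstacle---is controlling the new coupling term $\lambda_1(z)u(\theta)$ in the slow drift: one must confirm that folding $\theta$ into the slow variable preserves continuity, local Lipschitzness, and boundedness of $H$ on the event, so that the machinery of \cite{benaim:1999ab} transfers verbatim. This reduces to the boundedness and Lipschitz continuity of $u$ as a function of $\theta$, which follow from $\|u\|\le\xi_3$ and the Lipschitz continuity of its generating ODE. Everything else is a direct transcription of the proof of Theorem~\ref{thm:tts_asym}, with the deterministic nature of $\theta_n$ only simplifying matters, as that coordinate contributes no martingale noise.
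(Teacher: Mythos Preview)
Your proposal is correct and is essentially the same argument the paper intends: the paper's own proof of Theorem~\ref{thm:tts_asym_tv} consists of the single sentence ``exactly like that of Theorem~\ref{thm:tts_asym},'' and your stacking of $(\theta_n,z_n)$ into a single slow variable driven by $H=(h_3,h_4)$, invoking Lemma~\ref{lemma:vconv_tv} to absorb the fast variable as a vanishing $\chi_n$, and then applying Propositions~4.1--4.2 of \cite{benaim:1999ab}, is precisely the natural way to make that sentence rigorous. The only minor quibble is your justification that $u$ is Lipschitz ``because it is the output of the Lipschitz ODE $\dot\theta=-h_3(\theta)$'': that reasoning conflates regularity in $t$ with regularity in $\theta$; what you actually need (and what suffices on the bounded event) is that $u$, viewed as a function of $\theta$, is continuous and bounded, which the paper assumes directly via the hypothesis that $g_{TV}$ is continuous and $\|u\|\le\xi_3$.
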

The proof of Theorem~\ref{thm:tts_asym_tv} is exactly like that of Theorem~\ref{thm:tts_asym}. 

Together, Theorem~\ref{thm:tv} and Theorem~\ref{thm:tts_asym_tv} show that our approach can be modified to allow us to relax some of our assumptions, while still maintaining the theoretical properties of our algorithms.

\section{Counter-examples for other algorithms}
\label{app:counter}

We now show that three state-of-the art algorithms for finding the local Nash equilibria of zero-sum games are all attracted to non-Nash equilibria. This implies that these algorithms do not have guarantees on the local optimality of their limit points and cannot give guarantees on convergence to local Nash equilibria. To do this we construct a simple counter-example in $\mb{R}^2$ that demonstrates that all of the algorithms are attracted to some non-Nash limit point. The particular game we use is described below.

\begin{example}
\label{ex:counterex}
	Consider the game $\mc{G}=\{(f,-f),\mb{R}^2\}$ where:
		\[ f(x,y)=\frac{1}{2}\bmat{x \\ y}^T\bmat{1 & 1 \\ 1 & 0.1}\bmat{x \\ y}.\]
	Staying with our earlier notation, the player with variable $x$ seeks to minimize $f$, and the player with variable $y$ minimizes $-f$.

	This game has only one critical point, $(x,y)=(0,0)$, and the combined gradient dynamics for this game are linear and are given by:
	\[ \omega(x,y)=\bmat{1 & 1 \\- 1 & -0.1}\bmat{x \\ y}.\]
	For all $x,y \in \mb{R}$ the Jacobian of $\omega$ is given by:
	\[ J(x,y)=\bmat{1& 1 \\- 1 & -0.1}.\]
	Since the diagonal is not strictly positive, $(x,y)=(0,0)$ is not an LNE. However, since the eigenvalues of $J(x,y)$ are $0.45+0.835i$ and $0.45-0.835i$, $(x,y)=(0,0)$ is a LASE of the gradient dynamics.
\end{example}

We first show that even running SGA on two timescales as suggested by \cite{paper:wrongTTS}, cannot guarantee convergence to only local Nash equilibria. We assume that the maximizing player is on the slower timescale, and, in the following proposition, show that SGA on two timescales can still converge to non-Nash fixed points.

\begin{proposition}
	Simultaneous gradient ascent on two timescales can converge to non-Nash equilibria.
\end{proposition}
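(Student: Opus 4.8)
The plan is to exploit the two-timescale separation by reducing the coupled iteration to a fast subsystem (the minimizing player) and a slow subsystem (the maximizing player), and then to invoke the classical two-timescale stochastic approximation theory of \cite{borkar:book} to conclude convergence to the non-Nash point $(0,0)$ already identified in Example~\ref{ex:counterex}. Writing the update with the maximizing player $y$ on the slow timescale (step sizes $a_n$) and the minimizing player $x$ on the fast timescale (step sizes $b_n$), with $a_n/b_n\rar0$ as in Assumption~\ref{ass:ss1}, the iteration takes the form
\begin{align*}
x_{n+1}&=x_n-b_nD_xf(x_n,y_n),\\
y_{n+1}&=y_n+a_nD_yf(x_n,y_n),
\end{align*}
so that $x$ equilibrates while $y$ is held quasi-static.

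First I would compute the gradients of the quadratic explicitly, obtaining $D_xf=x+y$ and $D_yf=x+0.1y$. Freezing $y$, the fast subsystem is $\dot x=-(x+y)$, which is linear with a strictly negative coefficient on $x$; hence it admits the globally asymptotically stable equilibrium $x^*(y)=-y$, and $x^*$ is trivially Lipschitz in $y$. This is exactly the ingredient the singular-perturbation reduction requires: for each frozen $y$ the fast flow converges to the well-defined, Lipschitz map $x^*(y)$.

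Next I would substitute $x=x^*(y)=-y$ into the slow ascent dynamics to obtain the effective slow ODE $\dot y=D_yf(x^*(y),y)=-y+0.1y=-0.9y$, whose unique equilibrium $y^*=0$ is globally asymptotically stable. Tracing back, $x^*(0)=0$, so the two-timescale dynamics drive $(x_n,y_n)\rar(0,0)$. Since Example~\ref{ex:counterex} already establishes that $(0,0)$ is a non-Nash LASE of the gradient dynamics (the Jacobian has eigenvalues $0.45\pm0.835i$ but its symmetric part is indefinite, violating the DNE conditions), this proves the claim.

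Finally, to make the convergence rigorous rather than merely formal, I would verify the hypotheses of the two-timescale convergence theorem in the manner of Lemma~\ref{lemma:vconv}: Lipschitz continuity of both drifts (immediate, since the system is linear), the step-size conditions of Assumption~\ref{ass:ss1}, global asymptotic stability of the fast equilibrium $x^*(y)$ uniformly in $y$, and global asymptotic stability of the reduced slow equilibrium. Each of these follows directly from the explicit computations above. The only mild obstacle is the bookkeeping of the timescale separation itself, namely confirming that the maximizing player is genuinely slow so that the fast player provably tracks $x^*(y)$; once this is pinned down, the linearity of the example renders every remaining verification routine.
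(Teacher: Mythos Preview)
Your proposal is correct and follows essentially the same route as the paper: you use Example~\ref{ex:counterex}, place the minimizing player on the fast timescale, solve the fast equilibrium $x^*(y)=-y$, reduce the slow dynamics to $\dot y=-0.9y$, and conclude convergence to the non-Nash point $(0,0)$ via the two-timescale theory of \cite{borkar:book}. The only cosmetic difference is your step-size labeling ($x$ uses $b_n$, $y$ uses $a_n$), which is in fact the one consistent with Assumption~\ref{ass:ss1}; the paper's proof swaps the labels but carries out the identical computation.
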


\begin{proof}
	Consider the game introduced in Example~\ref{ex:counterex}, and the following dynamics:
	\begin{align*}
		x_{n+1}=x_n -a_n (\omega_1(x_n,y_n))\\
		y_{n+1}=y_n -b_n (\omega_2(x_n,y_n)),
	\end{align*}
	where $\omega_i$ denotes the $ith$ component of $\omega$ as described in Example~\ref{ex:counterex} and $a_n,b_n$ are sequences of step sizes satisfying Assumption~\ref{ass:ss1}.

	Since the dynamics $\dot x=-x-y$ have a unique equilibrium $x=-y$ for a fixed $y \in \mb{R}$ and the $x$ process is on the faster timescale, Chapter 4 in \cite{borkar:book}, assures us that $x_n\rar -y_n$ asymptotically. Now, assuming that $x_n$ has converged, we analyze the slower timescale. Plugging in for $x_n$, we get that, asymptotically, the dynamics will track:
	\[ \dot y=-y+0.1y=-0.9y.\]
	Since these dynamics have a unique (exponentially) stable equilibrium at $y=0$, $(x_n,y_n)\rar (0,0)$. As we showed, the origin is non-Nash, so we are done.
\end{proof}

The previous proposition shows that a two-timescale version of simultaneous gradient ascent will still be susceptible to non-Nash equilibria. This implies that such an approach cannot guarantee convergence to only local Nash equilibria. We now show that the consensus optimization approach introduced in \cite{paper:CO}, can also converge to non-Nash points.

\begin{proposition}
	Consensus optimization can converge to non-Nash equilibria.
\end{proposition}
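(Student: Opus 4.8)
The plan is to reuse the game in Example~\ref{ex:counterex} and show that its unique (non-Nash) critical point at the origin remains a LASE after the consensus adjustment is applied, for \emph{any} choice of the consensus hyperparameter. First I would recall that, specialized to a zero-sum game, consensus optimization augments the gradient field $\omega$ with the gradient of the regularizer $\tfrac{1}{2}\|\omega(z)\|_2^2$, so that the effective continuous-time dynamics are $\dot z=-\omega_{\mathrm{CO}}(z)$ with
\[ \omega_{\mathrm{CO}}(z)=\omega(z)+\gamma J^T(z)\omega(z)=(I+\gamma J^T(z))\omega(z), \qquad \gamma>0. \]
Since $I+\gamma J^T(z)$ is invertible for all sufficiently small $\gamma$, and in particular for the linear game at hand (where $J$ is constant), the critical points of $\omega_{\mathrm{CO}}$ coincide with those of $\omega$; in particular the origin is the unique critical point. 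Showing that the origin is a LASE of $\dot z=-\omega_{\mathrm{CO}}(z)$ then suffices, since LASE are locally exponentially attracting and a proper discretization converges to them, exactly as discussed in Section~\ref{sec:prelims}.

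Next I would linearize. Differentiating $\omega_{\mathrm{CO}}$ and evaluating at the origin, the term involving $D_zJ^T(z)$ is multiplied by $\omega(z)=0$ and hence drops, leaving
\[ J_{\mathrm{CO}}=J+\gamma J^TJ, \]
with $J=\bmat{1 & 1 \\ -1 & -0.1}$ as in Example~\ref{ex:counterex}. The remaining work is a direct computation: form $J^TJ=\bmat{2 & 1.1 \\ 1.1 & 1.01}$, assemble $J_{\mathrm{CO}}=\bmat{1+2\gamma & 1+1.1\gamma \\ -1+1.1\gamma & -0.1+1.01\gamma}$, and verify that both eigenvalues have strictly positive real part. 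For a $2\times 2$ matrix this reduces to checking $\mathrm{tr}(J_{\mathrm{CO}})=0.9+3.01\gamma>0$ and $\det(J_{\mathrm{CO}})=0.9+0.81\gamma+0.81\gamma^2>0$, both of which hold for every $\gamma\ge0$. Hence the origin is a LASE of the consensus dynamics, and since it is non-Nash (because $D^2_{yy}f=0.1\not\prec 0$), consensus optimization converges to a non-Nash equilibrium.

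The main obstacle---really the only substantive point---is establishing that the adjustment cannot rescue the point no matter how $\gamma$ is tuned. This is exactly what the trace/determinant computation delivers: because $J^TJ\succeq 0$, the consensus term only shifts the spectrum of $J$ further into the right half-plane, so the non-Nash point becomes, if anything, more strongly attracting. I would emphasize this monotone-in-$\gamma$ behavior to make clear that the failure is structural rather than an artifact of a poor parameter choice, mirroring the discussion in Section~\ref{sec:prelims} that identifies the antisymmetric block $A$ as the mechanism stabilizing non-Nash points.
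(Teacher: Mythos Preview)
Your proposal is correct and follows essentially the same route as the paper: both use the game from Example~\ref{ex:counterex}, compute the Jacobian of the consensus dynamics at the origin as $J+\gamma J^TJ$, obtain the identical $2\times 2$ matrix, and conclude that its eigenvalues have strictly positive real part for every $\gamma>0$. Your explicit trace/determinant check ($\mathrm{tr}=0.9+3.01\gamma$, $\det=0.9+0.81\gamma+0.81\gamma^2$) supplies the verification that the paper merely asserts, and your remark that $J^TJ\succeq 0$ only pushes the spectrum rightward is a nice structural explanation, but neither changes the argument.
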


\begin{proof}
	The update in consensus optimization is given by:
		\[ z_{n+1}=z_n-\gamma \left( \omega(z_n)+\lambda J^T(z_n)\omega(z_n)\right),\]
where $\lambda>0$ is a hyperparameter to be chosen. We note that the signs are different than in \cite{paper:CO} because we consider simultaneous gradient descent as the base algorithm while they consider simultaneous gradient ascent. The limiting dynamics of this approach is given by:
	\[ \dot z = -\left( \omega(z_n)+\lambda J^T(z_n)\omega(z_n)\right).\]
	At a critical point $z$ where $\omega(z)=0$, the Jacobian of this dynamics is given by:
	\[ J_{CO}(z)=(J(z)+\lambda J^T(z)J(z)). \]
	Now, in the game described in Example~\ref{ex:counterex}, the Jacobian of the consensus optimization dynamics would be:
	\[ J_{CO}(x,y)=\bmat{1+2\lambda & 1+1.1\lambda \\- 1 +1.1\lambda & -0.1+1.01\lambda}.\]
	For $\lambda>0$, $J_{CO}(x,y)$ has eigenvalues with strictly positive real parts, which implies that, with any choice of the hyper-parameter $\lambda$, consensus optimization will converge to the non-Nash equilibrium at (0,0) in Example~\ref{ex:counterex}.
\end{proof}

The preceding proposition shows that the adjustment to the gradient dynamics proposed in \cite{paper:CO} does not solve the issue of non-Nash fixed points. This is not surprising since the primary goal for the algorithm is to reduce oscillation around equilibria and speed up convergence to the stable equilibria of the gradient dynamics. We note that as shown in Theorem~\ref{thm:algprops}, the proposed algorithm also achieves this goal. 

The final algorithm we consider is the symplectic gradient adjustment proposed in \cite{paper:ACO}. In the text, the authors do remark that the adjustment is not enough to guarantee convergence to only Nash equilibrium. For completeness, we make this assertion concrete by showing that the adjustment term is not enough to avoid the non-Nash equilibrium in Example~\ref{ex:counterex}.

\begin{proposition}
	The symplectic gradient adjustment to the gradient dynamics can converge to non-Nash equilibria.
\end{proposition}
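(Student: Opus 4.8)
The plan is to follow exactly the template of the two preceding propositions: linearize the symplectic gradient adjustment (SGA) dynamics at the unique critical point of Example~\ref{ex:counterex} and show that the resulting Jacobian has its entire spectrum in the open right half-plane for \emph{every} admissible value of the hyperparameter, so that the non-Nash origin is a LASE that SGA cannot avoid. The whole argument is a $2\times 2$ linear-algebra computation parallel to the consensus-optimization proof; the only conceptual point is to establish robustness across all $\lambda>0$.

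First I would record the SGA dynamics in this paper's sign convention. Writing $A(z)$ for the block-antisymmetric part of $J(z)$ from Section~\ref{sec:prelims}, the adjustment of \cite{paper:ACO} replaces the descent field $\omega$ by $\omega(z)+\lambda A^T(z)\omega(z)$ for a hyperparameter $\lambda>0$, so the limiting dynamics are $\dot z=-(\omega(z)+\lambda A^T(z)\omega(z))$. As in the consensus-optimization proposition, I would note that the signs differ from those in \cite{paper:ACO} because we take simultaneous gradient descent, not ascent, as the base dynamics. At a critical point $z$ with $\omega(z)=0$, differentiating the adjustment term annihilates the $D_z[A^T(z)]\omega(z)$ contribution, leaving
\[ J_{SGA}(z)=J(z)+\lambda A^T(z)J(z). \]

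Next I would substitute the data of Example~\ref{ex:counterex}, where $J=\bmat{1 & 1 \\ -1 & -0.1}$ and $A=\bmat{0 & 1 \\ -1 & 0}$, so that $A^T=-A$ and a direct product gives $A^T J=\bmat{1 & 0.1 \\ 1 & 1}$. Hence
\[ J_{SGA}=\bmat{1+\lambda & 1+0.1\lambda \\ -1+\lambda & -0.1+\lambda}. \]
From here I would read off $\operatorname{tr}(J_{SGA})=0.9+2\lambda$ and, after the routine expansion, the clean factorization $\det(J_{SGA})=0.9(1+\lambda^2)$. Since a real $2\times 2$ matrix has both eigenvalues in the open right half-plane precisely when its trace and determinant are both positive, and both quantities are strictly positive for every $\lambda>0$, the origin is a LASE of the SGA dynamics for every admissible hyperparameter. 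As Example~\ref{ex:counterex} already certifies that the origin is non-Nash, the proof is complete.

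There is no genuine analytic obstacle here; the main point requiring care is conceptual rather than technical. I must show positivity of the trace and determinant \emph{uniformly} in $\lambda>0$, so that the conclusion is robust to the choice of hyperparameter and cannot be evaded by tuning $\lambda$. The fact that $\det(J_{SGA})=0.9(1+\lambda^2)$ has no real roots is exactly what makes this uniformity transparent and rules out any stabilizing choice of $\lambda$.
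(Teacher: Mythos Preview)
Your proposal is correct and follows essentially the same route as the paper: linearize the SGA dynamics at the origin in Example~\ref{ex:counterex}, obtain the $2\times 2$ Jacobian $J_{SGA}$, and verify that its spectrum lies in the open right half-plane for every $\lambda>0$. Your trace--determinant argument ($\operatorname{tr}=0.9+2\lambda$, $\det=0.9(1+\lambda^2)$) is in fact more explicit than the paper's, which simply asserts the eigenvalue conclusion after displaying the matrix; your formulation via $A^T$ is equivalent to the paper's $\tfrac{1}{2}(J-J^T)^T$ since $A=\tfrac{1}{2}(J-J^T)$.
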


\begin{proof}
	The dynamics resulting from the symplectic gradient adjustment is given by:
		\[ z_{n+1}=z_n-\gamma \left( \omega(z_n)+\frac{\lambda}{2} (J(z_n)-J^T(z_n))^T\omega(z_n)\right),\]
where $\lambda>0$ is a hyperparameter to be chosen. The limiting dynamics of this algorithm is given by:
	\[ \dot z = -\left( \omega(z_n)+\frac{\lambda}{2} (J(z_n)-J^T(z_n))^T\omega(z_n)\right).\]
	At a critical point $z$ where $\omega(z)=0$, the Jacobian of the above dynamics is given by:
	\[ J_{SGA}(z)=(J(z)+\frac{\lambda}{2}( J(z)-J^T(z))^TJ(z)). \]
	Now, in the game described in Example~\ref{ex:counterex}, the Jacobian of the SGA dynamics is given by:
	\[ J_{SGA}(x,y)=\bmat{1+\frac{\lambda}{2} & 1+0.1\frac{\lambda}{2} \\ 1 +\frac{\lambda}{2} & -0.1+\frac{\lambda}{2}}.\]
	This has eigenvalues with strictly  positive real parts for all values of $\lambda>0$, which implies that the symplectic gradient adjustment will converge to the non-Nash equilibrium at (0,0) in Example~\ref{ex:counterex}.
\end{proof}

\section{Efficient computation of the two-timescale update}
\label{app:jac_vec_prods}
We now show that by leveraging auto-differentiation tools we can compute the stochastic approximation update efficiently. In particular, using Jacobian-vector products, the computation of $h_1$ and $h_2$ can be done relatively quickly. We first note that a Jacobian-vector product calculates $J^T(z)u$ for a constant vector $u \in \mb{R}^d$, by calculating the gradient of $\omega^T(z)u$ with respect to $v$. This allows us to write the $x$ and $y$ updates in Algorithm~\ref{alg:alg1} in a clean form.

The next proposition shows that, using the structure of $J$ that was discussed in Section~\ref{sec:prelims}, the term $J(z)v$ can also be efficiently computed. Together these results show that the per-iteration complexity of LSS is on the same order as that of consensus optimization and the symplectic gradient adjustment. 

\begin{proposition}
\label{prop:ctproperties}
The computation of $J(z)v$ requires two Jacobian-vector products, for $v\in \mb{R}^d$ a constant vector.
\end{proposition}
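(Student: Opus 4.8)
The plan is to realize the forward-mode product $J(z)v$ as the gradient, with respect to an auxiliary variable, of a scalar that is itself produced by a single application of the primitive already in hand. Concretely, I would compute $J(z)v$ by invoking the reverse-mode operation $u \mapsto J^T(z)u$ twice, so that the whole computation is two Jacobian-vector products.

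First I would recall the primitive established just above the proposition: for any $u \in \mb{R}^d$ and fixed $z$, evaluating $\nabla_z\big(\omega(z)^T u\big)$ returns $J^T(z)u$, which counts as one Jacobian-vector product. This is where the structure from Section~\ref{sec:prelims} enters, namely that $J(z)=D_z\omega(z)$; the symmetry of the diagonal blocks together with $D^2_{xy}f=(D^2_{yx}f)^T$ guarantees that reverse-mode differentiation of $\omega$ contracted against $u$ yields exactly the transposed Jacobian acting on $u$.

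Next I would introduce an auxiliary variable $u \in \mb{R}^d$ and consider, with $z$ and $v$ held fixed, the scalar map
\[
u \longmapsto \big\langle J^T(z)u,\, v\big\rangle = u^T J(z)\,v .
\]
Since $J(z)$ is a constant linear operator here, this map is linear in $u$, and therefore
\[
\nabla_u\big\langle J^T(z)u,\, v\big\rangle = J(z)\,v .
\]
Thus I would compute $J^T(z)u$ as a function of $u$ (the first Jacobian-vector product), contract it with the constant vector $v$ at no differentiation cost, and then differentiate the resulting scalar with respect to $u$ (the second Jacobian-vector product). The two applications of the primitive return $J(z)v$ exactly, establishing the claim.

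The one point that must be handled carefully is the second differentiation: I have to verify that freezing $z$ and treating $u$ as the sole variable is what isolates $J(z)v$, so that no third-order derivatives of $f$ creep in. Because $\langle J^T(z)u,v\rangle$ is a bilinear form in $(u,v)$ with $z$ frozen, differentiating in $u$ sees $J(z)$ as a constant matrix and recovers precisely its action on $v$, and the $z$-dependence of $J$ is never differentiated. I expect this separation of the outer variable $u$ from the frozen argument $z$ to be the only subtle step, the remainder being a direct reuse of the already-established primitive.
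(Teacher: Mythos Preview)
Your argument is correct but follows a genuinely different route from the paper's. The paper exploits the block structure of $J$ specific to two-player zero-sum games: splitting $v=(v_1,v_2)$ with $v_1\in\mb{R}^{d_x}$, $v_2\in\mb{R}^{d_y}$, one checks directly from \eqref{eq:jac} that
\[
J(z)v=\bmat{I & 0\\ 0 & -I}J^T(z)\bmat{v_1\\0}+\bmat{-I & 0\\ 0 & I}J^T(z)\bmat{0\\v_2},
\]
so $J(z)v$ is obtained from two calls to the \emph{same} primitive $u\mapsto J^T(z)u$, followed by sign flips and a sum. Your approach instead uses the general forward-from-reverse trick---differentiating $u\mapsto \langle J^T(z)u,v\rangle$ in $u$---which works for an arbitrary smooth vector field $\omega$ and needs none of the zero-sum block structure. (Your remark that the symmetry of the diagonal blocks is what makes $\nabla_z(\omega^Tu)=J^T(z)u$ is off: that identity is just the chain rule and holds for any $\omega$.) The trade-off is that the paper's two operations are literally two invocations of the stated primitive, requiring no higher-order autodiff, whereas your second ``Jacobian-vector product'' is a backward pass \emph{through} the first backward pass; it has comparable cost but is a different primitive, and the paper's point is precisely that the zero-sum structure lets one avoid that.
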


\begin{proof}
Let $v=(v_1,v_2)^T$ where $v_1\in \mb{R}^{d_x}$ and $v_2\in \mb{R}^{d_y}$. Then $J(z)v$ can be written as:
\[ J(z)v=\bmat{I & 0\\ 0 &-I}J^T(z)\bmat{v_1\\0} +\bmat{-I & 0\\ 0 &I}J^T(z)\bmat{0\\v_2}.\]
We note that the above expression is only possible due to the structure of the Jacobian in two-player zero-sum games. Having written $J(z)$ as above, it is now clear that computing $J(z)v$ will require two Jacobian-vector products.
\end{proof}

\section{Further numerical examples} 
\label{app:numex}

We now present further numerical experiments as well as the specific experimental setup we employed when training GANs. In Figure~\ref{fig:gan2} we show further numerical experiments that show the training of a generative adversarial network from various initial conditions. In Figure~\ref{fig:gan2}A, both simGD and LSS quickly converge to the correct solution. In Figure~\ref{fig:gan2}B, however, simGD only correctly identifies five of the Gaussians in the mixture while LSS converges to the correct solution.

\begin{figure}[h]
\center    
      \includegraphics[width=0.75\columnwidth]{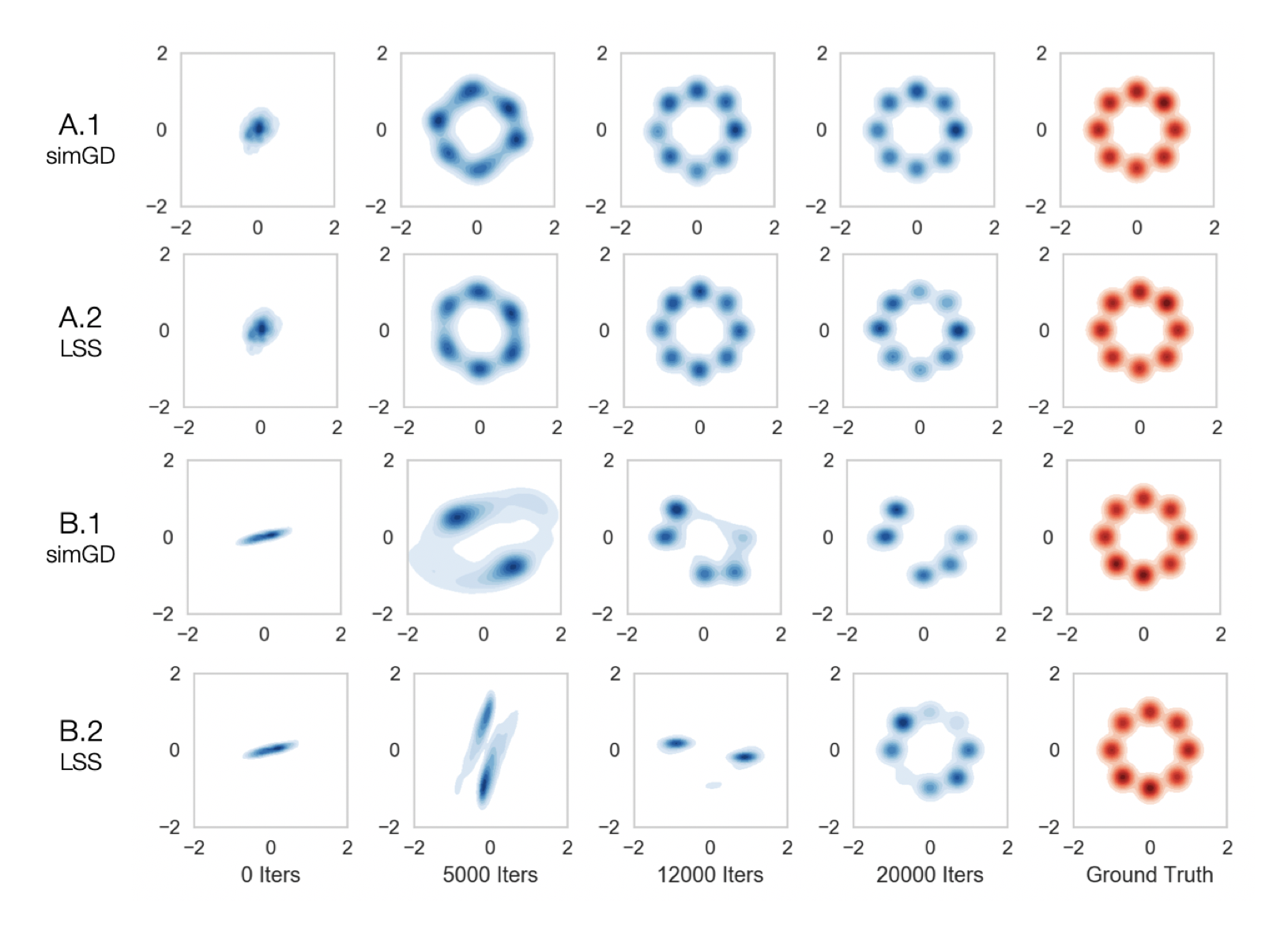}
        \caption{A. A GAN trained with (A.1) simGD and (A.2) with our two-timescale method. b. A GAN trained with (B.1) simGD and (B.2) with our two-timescale method. }
  \label{fig:gan2}
\end{figure}

For the training of the GAN, we randomly initialize the weights and biases using the standard TensorFlow initializations. For both the implementation of simGD and LSS used to generate Figure~\ref{fig:gan} and Figure~\ref{fig:gan2}, we used the RMSProp optimizer with step size $2e-4$ for the $x$ and $y$ processes. For the $v$ process in LSS, we used the RMSProp optimizer with step size $1e-5$. We note that these do not satisfy our assumptions on the step size sequences for the two-timescale process, but are meant to show that the approach still works in practical settings. Lastly, we chose a damping function $g(z)=e^{-0.001||\omega(z)||^2}$ and a batch size of $128$.

\section{Useful propositions} 
\label{app:props}

In this section we present a useful theorem from previous work that we invoke in our proofs. The first proposition, a combination of Proposition 4.1 and 4.2 from \cite{benaim:1999ab}, gives us conditions under which a stochastic approximation process is an asymptotic pseudo-trajectory of the underlying ODE. 

\begin{proposition}
Let $h$ be a continuous globally integrable vector field. Further, let $x(t,s,x_s)$ for $t\ge s$ be a trajectory of the dynamical system $\dot x=-h(x)$ starting from state $x_s$ at time $s$. Finally let the stochastic approximation process be given by:
\[ x_{n+1}=x_n+a_n(h(x_n)+\chi_n+M_{n+1}),\]
where:
\begin{enumerate}
	\item $\sup_n ||x_n|| <\infty$
	\item $\sup_n \mb{E}[||M_{n}||^2<\infty$
	\item $\sum_{i=0}^\infty a_n^2<\infty$
	\item $\sum_{i=0}^\infty a_n=\infty$
	\item $\lim_{n\rar\infty} \chi_n=0$ almost surely.
\end{enumerate}
Then:
\[ \lim_{n\rar \infty} \sup_{0\le h \le K} \ ||x_{n+h}-x(t_{n+h},t_n, x_n)\|=0.\]
\end{proposition}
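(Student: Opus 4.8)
The plan is to reconstruct the argument underlying Propositions 4.1 and 4.2 of \cite{benaim:1999ab}: reinterpret the recursion as a noisy Euler scheme, pass to a continuous-time interpolation, and show that this interpolation is an \emph{asymptotic pseudo-trajectory} of the flow associated with the recursion, via a Gronwall comparison in which every accumulated perturbation vanishes. Concretely, set $t_n=\sum_{i=0}^{n-1}a_i$ and define the piecewise-affine interpolation $\bar X(t)$ by $\bar X(t_n)=x_n$ and linear interpolation on each $[t_n,t_{n+1}]$. Writing $\lfloor s\rfloor=\max\{t_k:t_k\le s\}$ and telescoping the recursion, one obtains for $t\ge t_n$ an integral identity of the form
\[
\bar X(t)=x_n+\int_{t_n}^{t}h(\bar X(\lfloor s\rfloor))\,\dd s + \Xi(t_n,t)+\Theta(t_n,t),
\]
where $\Xi$ collects the weighted martingale increments $\sum a_k M_{k+1}$ and $\Theta$ the weighted error terms $\sum a_k\chi_k$ accumulated over $[t_n,t]$. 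Comparing this with the exact flow $x(\cdot,t_n,x_n)$, which satisfies $x(t,t_n,x_n)=x_n+\int_{t_n}^t h(x(s,t_n,x_n))\,\dd s$, reduces the claim to showing that three perturbations—the noise term $\Xi$, the error term $\Theta$, and the discretization error $h(\bar X(\lfloor s\rfloor))-h(\bar X(s))$—are uniformly negligible over any window $[t_n,t_n+T]$ as $n\rar\infty$.

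The main obstacle, and the content of Proposition 4.2 of \cite{benaim:1999ab}, is the control of $\Xi$. Here I would use conditions (2) and (3): the process $W_m=\sum_{k=0}^{m-1}a_k M_{k+1}$ is a martingale with respect to $\mc{F}_m$ (each $a_k$ is deterministic and $\mb{E}[M_{k+1}\mid\mc{F}_k]=0$), and by orthogonality of the increments
\[
\sup_m\mb{E}[\|W_m\|^2]=\sum_{k}a_k^2\,\mb{E}[\|M_{k+1}\|^2]\le \Big(\sup_k \mb{E}[\|M_{k}\|^2]\Big)\sum_k a_k^2<\infty,
\]
so $W_m$ is bounded in $L^2$ and hence converges almost surely to a finite limit. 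Since $\Xi(t_n,t_n+s)=W_{m(n,s)}-W_n$ for a suitable index $m(n,s)\rar\infty$, the Cauchy criterion gives $\sup_{0\le s\le T}\|\Xi(t_n,t_n+s)\|\rar0$ a.s. The error term is easier: over a horizon $T$ the total step weight obeys $\sum_k a_k\le T+o(1)$, so $\sup_{0\le s\le T}\|\Theta(t_n,t_n+s)\|\le (T+o(1))\sup_{k\ge n}\|\chi_k\|\rar0$ by condition (5). Finally, since $\sum a_k^2<\infty$ forces $a_k\rar0$, the increment of $\bar X$ over a single step tends to zero uniformly; combined with the Lipschitz/boundedness of $h$ on the a priori bounded trajectory (condition (1), giving a compact invariant set on which the $\mc{C}^1$ field $h$ is Lipschitz), the discretization error is also uniformly negligible.

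With all perturbations controlled, I would close by a Gronwall estimate. Subtracting the two integral identities and setting $e(s)=\|\bar X(t_n+s)-x(t_n+s,t_n,x_n)\|$, the bound $\|h(u)-h(w)\|\le L\|u-w\|$ on the relevant compact set yields
\[
e(s)\le E(n,T)+L\int_0^s e(u)\,\dd u,\qquad 0\le s\le T,
\]
where $E(n,T)$ is the sum of the three perturbation bounds above. Gronwall's inequality then gives $\sup_{0\le s\le T}e(s)\le E(n,T)\,e^{LT}\rar0$ almost surely, i.e. $\bar X$ is an asymptotic pseudo-trajectory of the flow. The stated discrete conclusion follows upon evaluating at the nodes $t_{n+h}$, since $\bar X(t_{n+h})=x_{n+h}$ and the horizon $\sup_{0\le h\le K}(t_{n+h}-t_n)$ is finite. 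I expect the noise-control step to be the crux; the Gronwall comparison, the handling of $\chi_n$, and the discretization error are all routine once $a_n\rar0$ and the $L^2$ martingale convergence are in hand. (For a field that is merely continuous rather than Lipschitz, the Gronwall step would instead be replaced by an equicontinuity and Arzel\`a--Ascoli argument, but in our application $h$ is $\mc{C}^1$ and this refinement is unnecessary.)
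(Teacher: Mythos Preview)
The paper does not supply its own proof of this proposition: it is stated without argument in Appendix~\ref{app:props} as a direct combination of Propositions~4.1 and~4.2 of \cite{benaim:1999ab}, and is invoked elsewhere in the paper only as a black box. Your reconstruction is correct and is exactly the standard Bena\"im argument---continuous-time interpolation, $L^2$-bounded martingale convergence to control the noise increments, the trivial bound on $\sum a_k\chi_k$ from $\chi_k\rar0$, and a Gronwall comparison on compacts (with the Ascoli variant you note for merely continuous fields)---so there is nothing to contrast.
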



\end{document}